\newcommand{\indicator}[1]{\mathbf{1}_{#1}}
\definecolor{codegreen}{rgb}{0,0.6,0}
\definecolor{codegray}{rgb}{0.5,0.5,0.5}
\definecolor{codepurple}{rgb}{0.58,0,0.82}
\definecolor{backcolour}{rgb}{0.95,0.95,0.92}
\lstdefinestyle{mystyle}{
    backgroundcolor=\color{backcolour},   
    commentstyle=\color{codegreen},
    keywordstyle=\color{magenta},
    numberstyle=\tiny\color{codegray},
    stringstyle=\color{codepurple},
    basicstyle=\ttfamily\footnotesize,
    breakatwhitespace=false,         
    breaklines=true,                 
    captionpos=b,                    
    keepspaces=true,                 
    showspaces=false,                
    showstringspaces=false,
    showtabs=false,                  
    tabsize=2,
    frame=single, 
    framerule=0.4pt, 
    basewidth=0.5em, 
    columns=flexible 
}
\definecolor{baselinecolor}{gray}{.9}
\let\@algcomment\relax
\newcommand\algcomment[1]{\def\@algcomment{\footnotesize#1}}
\renewcommand\fs@ruled{\def\@fs@cfont{\bfseries}\let\@fs@capt\floatc@ruled
  \def\@fs@pre{\hrule height.8pt depth0pt \kern2pt}%
  \def\@fs@post{}%
  \def\@fs@mid{\kern2pt\hrule\kern2pt}%
  \let\@fs@iftopcapt\iftrue}
\definecolor{tabhighlight}{HTML}{e5e5e5}
\definecolor{tabhighlight2}{HTML}{e8e8e8}
\newcommand{\para}[1]{
  \noindent\textbf{#1}
}
\newcommand\blfootnote[1]{%
  \begingroup
  \renewcommand\thefootnote{}\footnote{#1}%
  \addtocounter{footnote}{-1}%
  \endgroup
}
\definecolor{ForestGreen}{RGB}{34,139,34}
\renewcommand{\paragraph}[1]{\medskip\noindent\textbf{#1.~}}
\theoremstyle{plain}
\newmdtheoremenv{corollary}{Corollary}
\newtheorem{theorem}{Theorem}
\newtheorem{definition}{Definition}
\newmdtheoremenv[linewidth=0pt,innerleftmargin=4pt,innerrightmargin=4pt]{lemma}{Lemma}%
\newcommand*{\circled}[1]{\lower.7ex\hbox{\tikz\draw (0pt, 0pt)%
    circle (.5em) node {\makebox[1em][c]{\small #1}};}}
\title{Rethinking Verification for LLM Code Generation: \\ From Generation to Testing}
\author{
\textbf{Zihan Ma\textsuperscript{\rm 1,2,3,$*$}, 
Taolin Zhang\textsuperscript{\rm 1,$*$}, 
Maosong Cao\textsuperscript{\rm 1}, 
Junnan Liu\textsuperscript{\rm 1}, 
Wenwei Zhang\textsuperscript{\rm 1}}, 
\\ \textbf{Minnan Luo\textsuperscript{\rm 2,3,$\dagger$}, 
Songyang Zhang\textsuperscript{\rm 1,$\dagger, \ddagger$}, 
,Kai Chen\textsuperscript{\rm 1,$\dagger$}}\\
\textsuperscript{\rm 1}Shanghai AI Laboratory \\
\textsuperscript{\rm 2}School of Computer Science and Technology, Xi’an Jiaotong University, China \\
\textsuperscript{\rm 3}MOE KLINNS Lab, Xi’an Jiaotong University, China \\
\texttt{\{mazihan880\}@stu.xjtu.edu.cn} \\ 
\texttt{\{zhangtaolin,zhangsongyang\}@pjlab.org.cn}\\
\texttt{\{minnluo\}@xjtu.edu.cn} \\ 
}
\begin{document}
\blfootnote{$\dagger$ means corresponding authors, $^{\ddagger}$ means project, $*$ means authors contributed equally.}
\maketitle
\vspace{-1em}
\begin{abstract}
    Large language models (LLMs) have recently achieved notable success in code‑generation benchmarks such as HumanEval and LiveCodeBench. 
    However, a detailed examination reveals that these evaluation suites often comprise only a limited number of homogeneous test cases, resulting in subtle faults going undetected. This not only artificially inflates measured performance but also compromises accurate reward estimation in reinforcement learning frameworks utilizing verifiable rewards (RLVR). To address these critical shortcomings, we systematically investigate the test-case generation (TCG) task by proposing multi-dimensional metrics designed to rigorously quantify test-suite thoroughness. Furthermore, we introduce a human-LLM collaborative method (SAGA), leveraging human programming expertise with LLM reasoning capability, aimed at significantly enhancing both the coverage and the quality of generated test cases. In addition, we develop a TCGBench to facilitate the study of the TCG task.
    Experiments show that SAGA achieves a detection rate of 90.62\% and a verifier accuracy of 32.58\% on TCGBench. The Verifier Accuracy (Verifier Acc) of the code generation evaluation benchmark synthesized by SAGA is 10.78\% higher than that of LiveCodeBench-v6. These results demonstrate the effectiveness of our proposed method. We hope this work contributes to building a scalable foundation for reliable LLM code evaluation, further advancing RLVR in code generation, and paving the way for automated adversarial test synthesis and adaptive benchmark integration. \footnote{The data demo and prompts can be accessed via \url{https://github.com/open-compass/SAGA}}
\end{abstract}

\section{Introduction}
\label{sec: introduction}

Large Language Models (LLMs) have triggered a paradigm shift in automatic code generation, demonstrating capabilities on par with or even exceeding human programmers on numerous benchmark tasks. As LLMs become increasingly integrated into software development workflows, ensuring the quality and reliability of the code they produce is paramount. This necessitates reliable evaluation methodologies, where code verifiers—typically powered by test suites—play a critical role. This raises a crucial question: \textit{Are the test cases of current benchmarks for evaluating models’ code capabilities robust enough?}


Initial analyses of benchmarks like HumanEval~\cite{humaneval} (avg. 7.7 tests/problem), MBPP~\cite{mbpp} (3 tests/problem), and EvalPlus~\cite{evalplus, evalperf} (which saw a 15\% pass rate drop with 80× more tests) indicate the fragility of current evaluation setups due to sparse test coverage. While methods like TestEval~\cite{testeval} tailor tests to specific solutions, they are inefficient for large-scale evaluation and impractical for dynamic integration into RL training loops, thereby hindering the development of models robust against diverse failures. LiveCodeBench~\cite{LCB} employs LLMs to generate numerous tests from golden solutions and synthetic inputs, aiming to enhance robustness.

\begin{wrapfigure}{r}{0.5\textwidth}
    \vspace{-0.4cm}
    \centering
    \includegraphics[width=0.5\textwidth]{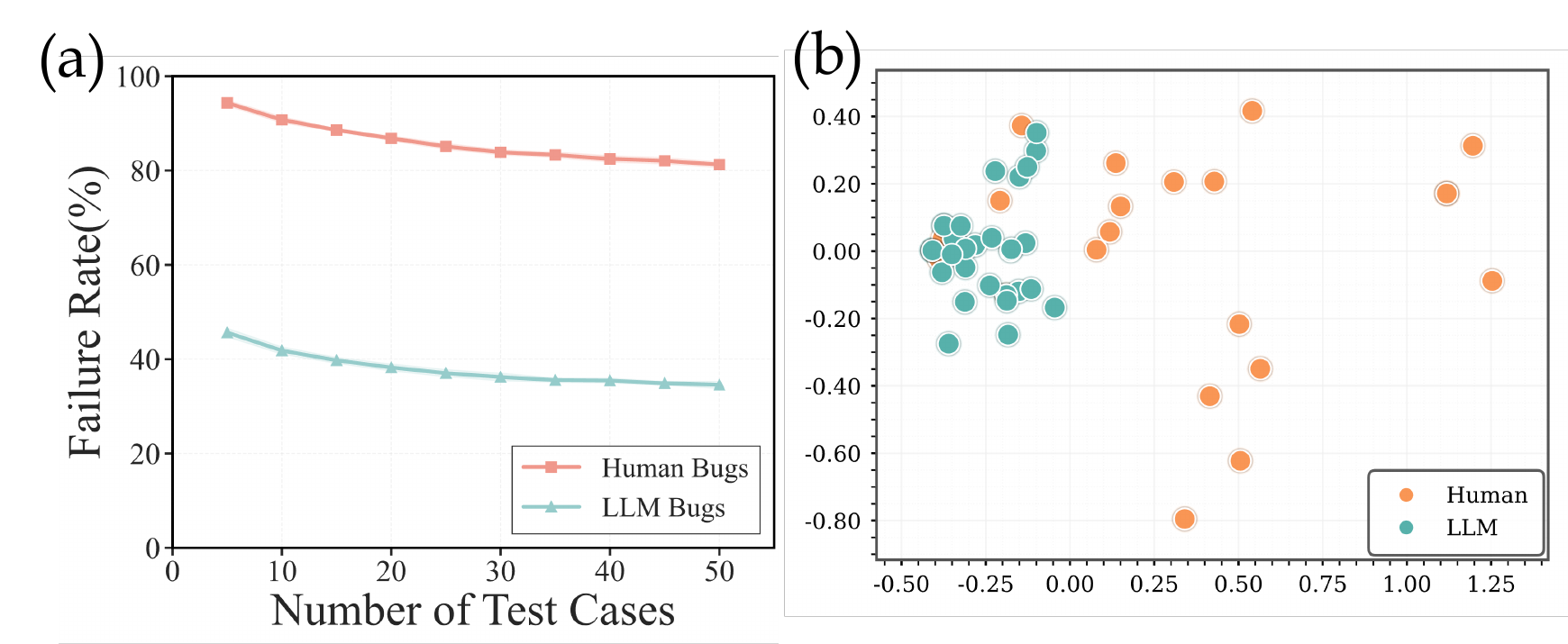}
    \vspace{-0.5cm}
    \caption{
    (a) Verifiers synthesized primarily from LLM-generated data exhibit a high failure rate when testing human-written bugs. (b) PCA analysis reveals that LLM-induced errors are highly clustered, indicating systematic weaknesses, whereas human errors are diverse and dispersed, posing greater challenges to existing verifiers.}
    \label{fig:motivation_combined}
    \vspace{-0.2cm}
\end{wrapfigure}

However, a fundamental concern is that such methods may inadvertently create tests biased towards typical, often homogenized, LLM error patterns, which starkly contrast with diverse human reasoning errors. Conversely, competitive programming platforms (Online Judges) possess extensive, rigorously curated test suites for assessing code robustness, but these are often private and inaccessible. This underscores the urgent need for accessible and robust code verifiers to enable reliable performance evaluation and reward estimation.


In this work, we first identify limitations in current benchmarks through preliminary experiments. Our investigation reveals critical weaknesses in existing verifier suites. Specifically, when we took LLM-generated solutions that had passed LiveCodeBench's private tests and re-evaluated them on LeetCode's online judge, we found that for a significant portion of these solutions—20\% for medium and 40\% for hard problems, respectively—LeetCode identified errors that LiveCodeBench's verifier had missed. This demonstrates that LiveCodeBench's verifiers can be flawed, failing to achieve comprehensive error detection and thereby overestimating the true quality of the LLM-generated solutions. Furthermore, these LLM-centric verifiers themselves exhibit a high failure rate (i.e., they fail to detect existing bugs) when evaluating human-written faulty code, a rate much higher than any apparent failure rate on LLM-generated errors (Figure~\ref{fig:motivation_combined}(a)). PCA analysis of error patterns (Figure~\ref{fig:motivation_combined}(b)) reveals that LLM errors cluster tightly, indicating shared systematic biases, while human errors are widely distributed across a complex error landscape.

These findings underscore the fundamental inadequacy of constructing verifiers solely from LLM-generated data, as such approaches misassess LLMs’ coding capabilities and provide flawed training feedback, manifesting two critical challenges:
\textbf{(1) Test Case Homogenization and LLM-Centric Bias}: LLM-based TCG methods produce test suites that mirror the generating models’ error patterns and cognitive biases, creating a “homogenization trap” where tests focus on LLM-like failures while neglecting diverse human programming errors (e.g., logical flaws, integer overflows).
\textbf{(2) Verifier Ineffectiveness and Persistent Blind Spots}: Verifiers built on such test suites exhibit blind spots for human-like errors, failing to rigorously evaluate code due to challenges in generating tests for complex boundary conditions and interaction scenarios, compounded by LLM-centric test design.
These challenges impede reinforcement learning frameworks (e.g., DeepSeek-R1~\cite{dsr1}, O1-IOI~\cite{o1ioi}) from leveraging verifiable rewards, leading to \textit{optimization misdirection} via \textit{reward hacking}\footnote{Models exploit verifier weaknesses instead of achieving genuine correctness, as they are insufficiently penalized for diverse errors.}.

To address these challenges, we develop a comprehensive measurement framework for code verifiers, introducing multi-dimensional evaluation metrics (detection rate, verifier accuracy) and identifying test case diversity and per-case strength as critical quality factors. We derive an upper bound for detection rate and validate it empirically across 1,500+ coding problems. Using this framework, we uncover systemic test quality issues in leading benchmarks, such as CodeForce-CoTs~\cite{codeforcecots}, where 50\% of problems had tests failing to detect known errors and 84\% of verifiers were flawed, further underscoring these limitations.

To systematically improve the quality of code verifiers, we formally define the Test Case Generation (TCG) task, comprising three core components: \textit{task formulation}, \textit{benchmark construction}, and \textit{method exploration}. We introduce TCGBench, a benchmark curated by aggregating representative problems from three leading competitive programming platforms—Atcoder, Codeforces, and Nowcoder. TCGBench curates human-verified adversarial examples spanning diverse error patterns (e.g., logical flaws, edge cases) and supports comprehensive evaluation of TCG methods.
We further investigate current TCG methods and propose SAGA (\textbf{S}trategic \textbf{A}dversarial \& Constraint-differential \textbf{G}ener\textbf{A}tive workflow), a novel human-LLM collaborative framework. SAGA is designed to systematically generate high-coverage, highly discriminative test cases by leveraging both human-derived constraints from correct solutions and insights from failure modes in incorrect solutions. This dual-pronged analytical approach allows SAGA to achieve improvements over current state-of-the-art TCG methods, with the Detection Rate increasing by 9.55\% and Verifier Accuracy by 12.14\%.

Additionally, we leverage SAGA to enhance the quality of the popular code generation benchmark LiveCodeBench-v6 (subset)\footnote{The subset used in our study comprises 101 problems from AtCoder, specifically from contests ABC387 through ABC400 and ARC190 through ARC196 (problems A to F). This selection facilitates a consistent evaluation scope when comparing with benchmarks like our proposed TCGBench, which also incorporates these recent algorithmic challenges.}, and develop CodeCompass, a new high-quality code benchmark. We believe SAGA can further be employed to scale datasets, enabling the production of training data with robust and accurate reward estimation for coding tasks.
Our main contributions are as follows:

\begin{itemize}[leftmargin=1pt]
    \setlength{\leftmargin}{1pt}
    \setlength{\itemsep}{0.5pt}
    \setlength{\parsep}{0.5pt}
    \setlength{\parskip}{0.5pt}
    \vspace{-0.3cm}
    \item We construct \textbf{TCGBench}, a comprehensive dataset from competitive programming platform, to analyze existing Test Case Generation practices. On it, we formalize multi-dimensional metrics for rigorous test case quality evaluation.
    \item We propose and validate \textbf{SAGA}, a novel human-LLM collaborative TCG framework (Fig.~\ref{fig:saga_overview}). By integrating insights from both correct and incorrect human solutions, SAGA generates significantly more effective test suites, improving Verifier Accuracy by 15.86\% over existing TCG methods.
    \item Using SAGA, we develop \textbf{CodeComPass}, a challenging benchmark with human-verified adversarial examples for robust code generation evaluation, and introduce \textbf{TCGCoder-7B}, a SAGA-distilled specialist model for capable TCG.
  
\end{itemize}




\begin{wrapfigure}{r}{0.52\textwidth}
    \centering
    \vspace{-0.5cm}
    \includegraphics[width=0.52\columnwidth]{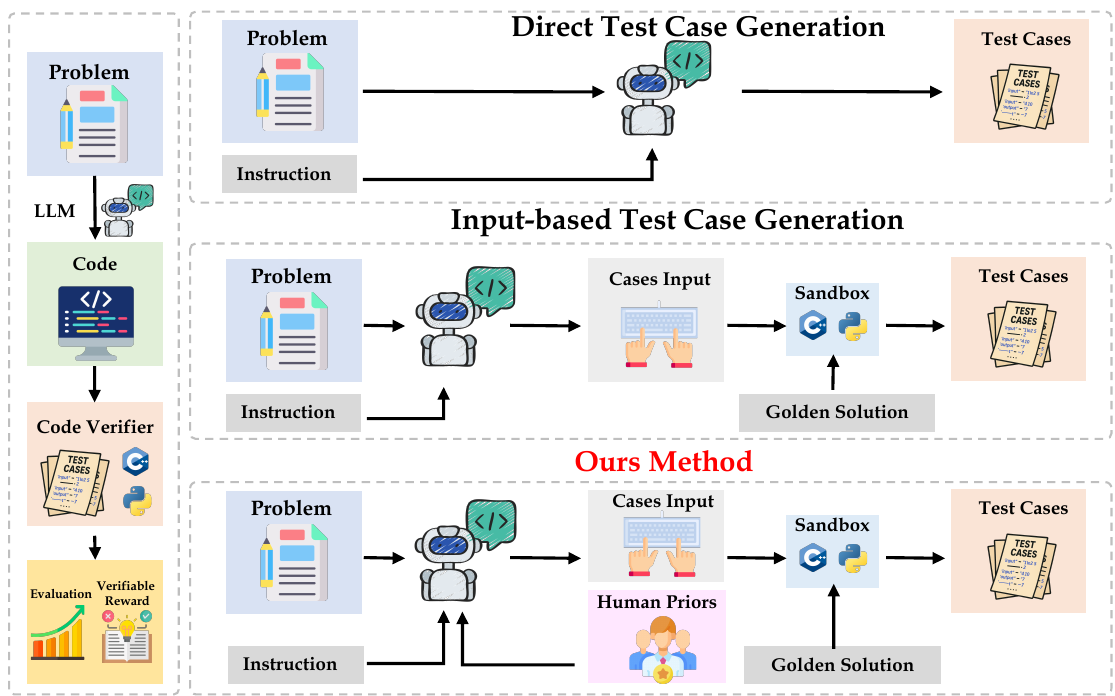} 
    \caption{The code evaluation pipeline and different TCG paradigms.} 
    \label{fig:tcg_paradigms_overview}
    \vspace{-0.2cm}
\end{wrapfigure}

\section{Evaluating Verifier Quality: Metrics and TCG Paradigms}
\label{sec:landscape_tcg}
\label{sec:evaluating_verifier_quality_metrics_paradigms}
\label{sec:imperative_formalization_tcg}
The reliable evaluation of LLM-generated code is critically constrained by the quality and accessibility of verifiers. Standard benchmarks like HumanEval~\cite{humaneval} often employ limited test suites, while the extensive private verifiers of Online Judges remain largely inaccessible for broader research. This scarcity of robust, accessible verifiers impedes accurate LLM evaluation and the advancement of RLVR.
To surmount this challenge, we focus on leveraging LLMs themselves for \textbf{Test Case Generation}—the systematic synthesis of test suites. Figure~\ref{fig:tcg_paradigms_overview} illustrates the pivotal role of a Code Verifierin the LLM code evaluation pipeline. It also highlights distinct TCG paradigms:

\begin{itemize}[leftmargin=1pt]
    \setlength{\leftmargin}{1pt}
    \setlength{\itemsep}{0.5pt}
    \setlength{\parsep}{0.5pt}
    \setlength{\parskip}{0.5pt}
    \item \textbf{Direct Generation:} An LLM directly produces complete test cases (inputs and outputs). Representative works include TestChain~\cite{testchain}, AceCoder~\cite{acecoder}, and CodeRM~\cite{coderm}.
        \item \textbf{Input-Interpreter:} An LLM generates test inputs; a ground-truth interpreter (or reference solution) then computes the corresponding outputs. This paradigm, exemplified by LiveCodeBench~\cite{LCB} and Codeforce-COT~\cite{codeforcecots}, often employs random input sampling—a strategy proven effective by LiveCodeBench for generating numerous diverse test cases and thus adopted in our baseline evaluations. EvalPlus~\cite{evalplus}, which mutates seed inputs for execution, also aligns with this approach. A comparative analysis with EvalPlus is detailed in Section~\ref{sec:main_results_analysis_tcgbench_lite}.
    \item \textbf{Human Priors (Our Approach):} LLM-driven TCG is guided by structured human expertise, a strategy central to our proposed SAGA framework (detailed in Section~\ref{sec:saga_framework}).
\end{itemize}
\subsection{Problem Definition}
Formally, for a programming problem $P \in \mathcal{P}$ with description $D$, input space $\mathcal{X}_P$, and ground-truth solution $f_P: \mathcal{X}_P \to \mathcal{Y}_P$, a TCG method aims to produce a test case set $T = \{(I_i, O_i)\}_{i=1}^n$, where each input $I_i \in \mathcal{X}_P$ and its corresponding output $O_i = f_P(I_i)$.
To quantify the quality of such generated test suites, we employ two key metrics:

\begin{definition}[Detection Rate (DR)]
A solution-level metric, DR quantifies a test suite's ability to detect errors in a \textit{specific incorrect candidate solution} $S \neq f_P$. It is the probability that $T$ identifies at least one error in $S$:
$
\epsilon_S(T) = \mathbb{P}\left( \exists (I, O) \in T \text{ s.t. } S(I) \neq O \right).
$
If $E_i = \{ S(I_i) \neq f_P(I_i) \}$ is the event that $S$ fails on test case $i$, then
$
\epsilon_S(T) = 1 - \mathbb{P}\left( \bigcap_{i=1}^n \overline{E_i} \right).
$
A higher average DR across many incorrect solutions indicates the test suite is effective at exposing diverse faults.
\end{definition}

\begin{definition}[Verifier Accuracy (VAcc)]
A problem-level metric, VAcc assesses whether a test suite $T$ can successfully identify \textit{all known incorrect solutions} for a given problem $P$. Given $\mathcal{S}_{\text{wrong}}(P) = \{ S : \mathcal{X}_P \to \mathcal{Y}_P \mid S \neq f_P \}$ as the set of all incorrect solutions for $P$, the Verifier Accuracy is:
$
\mathrm{VAcc}(T) = \mathbb{I}\left( \forall S \in \mathcal{S}_{\text{wrong}}(P), \exists (I, O) \in T \text{ s.t. } S(I) \neq O \right),
$
where $\mathbb{I}(\cdot)$ is the indicator function. $\mathrm{VAcc}(T)=1$ if $T$ rejects every solution in $\mathcal{S}_{\text{wrong}}(P)$, signifying perfect diagnostic capability.
\end{definition}


\subsection{Investigating Current TCG Paradigms and Their Limitations}
\label{sec:investigating_tcg_paradigms}
\begin{figure}[tp]

  \centering
  \includegraphics[width=1\textwidth]{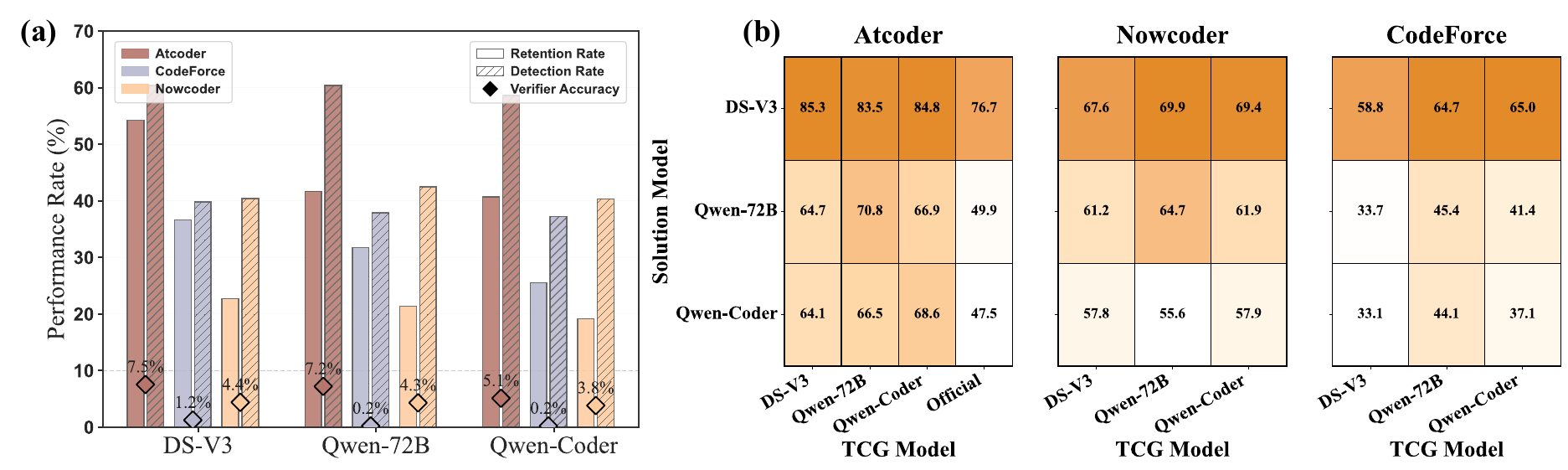}

  \caption{Direct generation issues: (a) Low quality of LLM-generated tests. (b) High self-pass rates suggest model blind spots.}
  \label{fig:direct_method_limitations}
  \vspace{-0.5cm}
\end{figure}
To ground our exploration of TCG effectiveness, we leverage \textbf{TCGBench}, a dataset we curated comprising 1840 recent programming problems from Atcoder, Codeforces, and Nowcoder, along with an average of 36.66 incorrect user submissions per problem. This rich resource (detailed in Appendix~\ref{app:tcgbench_full_details}) facilitates rigorous evaluation of TCG methodologies. Our analysis employ open-source LLMs: DeepSeek-V3-0324, Qwen2.5-72B-Instruct, and Qwen2.5-Coder-32B-Instruct with the greedy decoding strategy.

\para{For paradigm 1: Is Direct LLM-based Test Case Generation Effective?}
\label{sec:q1_direct_llm_tcg}


Directly prompting an LLM to generate complete test cases (inputs $I_i^{\mathrm{LLM}}$, outputs $O_i^{\mathrm{LLM}}$) from a problem description $P$, depends heavily on the LLM's deep comprehension, particularly of edge cases. Our experiments (Figure~\ref{fig:direct_method_limitations}), involving LLM generation of 50 diverse test cases per problem, reveal the issues. The \textit{retention rate} (proportion of valid tests post-verification against ground truth) is very low, indicating unreliable quality. This results in poor overall DR (often <60\%) and VAcc (<10\%). Moreover, LLM-generated solutions easily pass these self-generated tests. Notably, on AtCoder problems with historical official tests\footnote{Early AtCoder problems provided official test cases; however, from December 2024 onwards, test cases are no longer publicly available for new contests.}, LLM solutions performed substantially better on their own generated tests, suggesting such tests fail to challenge the model's cognitive biases.

\para{For paradigm 2: Can a Large Number of Inputs from an Input-Interpreter Approach Compensate for Low Quality?}

The Input-Interpreter paradigm, where an LLM generates random inputs $I_i^{\mathrm{Rand}} \sim \mathcal{X}_P$ for a ground-truth interpreter, is employed by benchmarks like LiveCodeBench~\cite{LCB}. While generating many tests is feasible, merely increasing the quantity $n$ does not fundamentally improve the detection rate. This limitation arises from inherent correlations between test cases.
We propose here a corollary for the upper bound of the detection rate\footnote{The full derivation is provided in Appendix~\ref{app:detection_rate_saturation_model}.}:
\begin{corollary}[Asymptotic Saturation of Detection Rate]
\label{cor:main_asymptotic_dr_saturation}
As the number of generated test cases $n \to \infty$, if $\bar{p}$ (the average probability of a single test detecting an error, $0 < \bar{p} < 1$) and $\bar{\rho}_{\mathrm{eff}}$ (the effective average positive correlation between detection events, $\bar{\rho}_{\mathrm{eff}} > 0$) are stable characteristics, the approximate upper bound on the detection rate $\epsilon_S(T)$ converges to:
$
\lim_{n \to \infty} \epsilon_S(T) \approx 1 - (1-\bar{p})^{1/\bar{\rho}_{\mathrm{eff}}} < 1.
$
\end{corollary}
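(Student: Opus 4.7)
The plan is to derive the saturation value through the classical effective-sample-size heuristic for sums of positively correlated Bernoulli indicators, treating the ``$\approx$'' in the statement as the slack inherent to moment matching. Let $X_i := \mathbf{1}_{E_i}$, so that $\epsilon_S(T) = 1 - \mathbb{P}(Y_n = 0)$ with $Y_n := \sum_{i=1}^n X_i$. Stability of $\bar{p}$ and $\bar{\rho}_{\mathrm{eff}}$ is taken to mean $\mathbb{E}[X_i] \equiv \bar{p}$ and $\Cov(X_i, X_j) \equiv \bar{\rho}_{\mathrm{eff}}\bar{p}(1-\bar{p})$ for $i \neq j$. Under independence, $\mathbb{P}(Y_n = 0) = (1-\bar{p})^n \to 0$, so detection would saturate at $1$; the role of $\bar{\rho}_{\mathrm{eff}} > 0$ is precisely to prevent that.

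First I would compute the first two moments directly from the assumptions: $\mathbb{E}[Y_n] = n\bar{p}$ and $\Var(Y_n) = n\bar{p}(1-\bar{p})[1 + (n-1)\bar{\rho}_{\mathrm{eff}}]$. Next I would introduce the effective sample size $n_{\mathrm{eff}} := n/[1 + (n-1)\bar{\rho}_{\mathrm{eff}}]$, defined as the (possibly non-integer) count that makes a sum of independent $\mathrm{Bern}(\bar{p})$ variables reproduce $\Var(Y_n)/n$ per trial. Substituting $n_{\mathrm{eff}}$ into the independence formula yields the working approximation $\mathbb{P}(Y_n = 0) \approx (1-\bar{p})^{n_{\mathrm{eff}}}$, i.e., $\epsilon_S(T) \approx 1 - (1-\bar{p})^{n_{\mathrm{eff}}}$. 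Finally, dividing numerator and denominator by $n$ gives $n_{\mathrm{eff}} \to 1/\bar{\rho}_{\mathrm{eff}}$ as $n \to \infty$, which delivers the claimed limit; strict inequality $< 1$ follows from $0 < 1-\bar{p} < 1$ and the finiteness of $1/\bar{\rho}_{\mathrm{eff}}$.

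The main obstacle is rigorously justifying the substitution $\mathbb{P}(Y_n = 0) \approx (1-\bar{p})^{n_{\mathrm{eff}}}$, because moment matching constrains only the first two moments of $Y_n$ while $\mathbb{P}(Y_n = 0)$ depends on the full joint law of the $X_i$. The cleanest conceptual route is to view the sequence as exchangeable and invoke de~Finetti: there exists a mixing law $\Pi$ on $[0,1]$ with $X_i \mid \Theta \sim \mathrm{Bern}(\Theta)$, giving $\mathbb{P}(Y_n = 0) = \mathbb{E}[(1-\Theta)^n] \to \Pi(\{0\})$ by dominated convergence, which already produces a strictly positive saturation of the miss probability whenever the generator places positive mass on ``fully safe'' sampling modes $\Theta = 0$. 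Matching the first two moments of $\Pi$ to $\bar{p}$ and $\bar{\rho}_{\mathrm{eff}}\bar{p}(1-\bar{p})$ and absorbing higher-order information into the ``$\approx$'' identifies this limiting atom with $(1-\bar{p})^{1/\bar{\rho}_{\mathrm{eff}}}$ at leading order. Empirical calibration on the 1{,}500+ TCGBench problems mentioned in the main text is then what justifies treating the heuristic as a tight upper bound in practice.
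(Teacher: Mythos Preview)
Your proposal is correct and follows essentially the same approach as the paper: model the variance of the correlated Bernoulli sum as $n\bar{p}(1-\bar{p})[1+(n-1)\bar{\rho}_{\mathrm{eff}}]$, introduce the Kish effective sample size $n_{\mathrm{eff}} = n/[1+(n-1)\bar{\rho}_{\mathrm{eff}}]$, substitute into the independence formula to obtain $\epsilon_S(T) \approx 1 - (1-\bar{p})^{n_{\mathrm{eff}}}$, and then pass to the limit $n_{\mathrm{eff}} \to 1/\bar{\rho}_{\mathrm{eff}}$. Your de~Finetti digression actually goes beyond the paper's derivation, which offers only a one-line proof sketch stating that $\mathbb{P}(X=0)$ is approximated by that of $n_{\mathrm{eff}}$ independent $\mathrm{Bern}(\bar{p})$ trials and otherwise leaves the moment-matching heuristic unjustified.
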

\vspace{-0.2cm}

This corollary implies that due to inter-test correlation $\bar{\rho}_{\mathrm{eff}}$, simply increasing the number of random tests cannot guarantee 100\% error detection; the detection rate will saturate. The underlying reasoning involves the concept of an "effective sample size" $n_{\mathrm{eff}} \approx n / (1 + (n - 1)\bar{\rho}_{\mathrm{eff}})$~\cite{kish1965survey}, which quantifies the diminishing utility of additional correlated tests. Our experiments on TCGBench (Figure~\ref{fig:random_sampling_results}) confirm this: detection rates plateau (Fig.~\ref{fig:random_sampling_results}(a)), and marginal gains diminish rapidly. Notably, plotting the detection rate against the logarithm of the number of test cases (Fig.~\ref{fig:random_sampling_results}(b)) reveals a trend consistent with our derived theoretical upper bound, further validating the impact of correlation $\bar{\rho}$.

\begin{figure}[tp]
    \centering
    \includegraphics[width=1\textwidth]{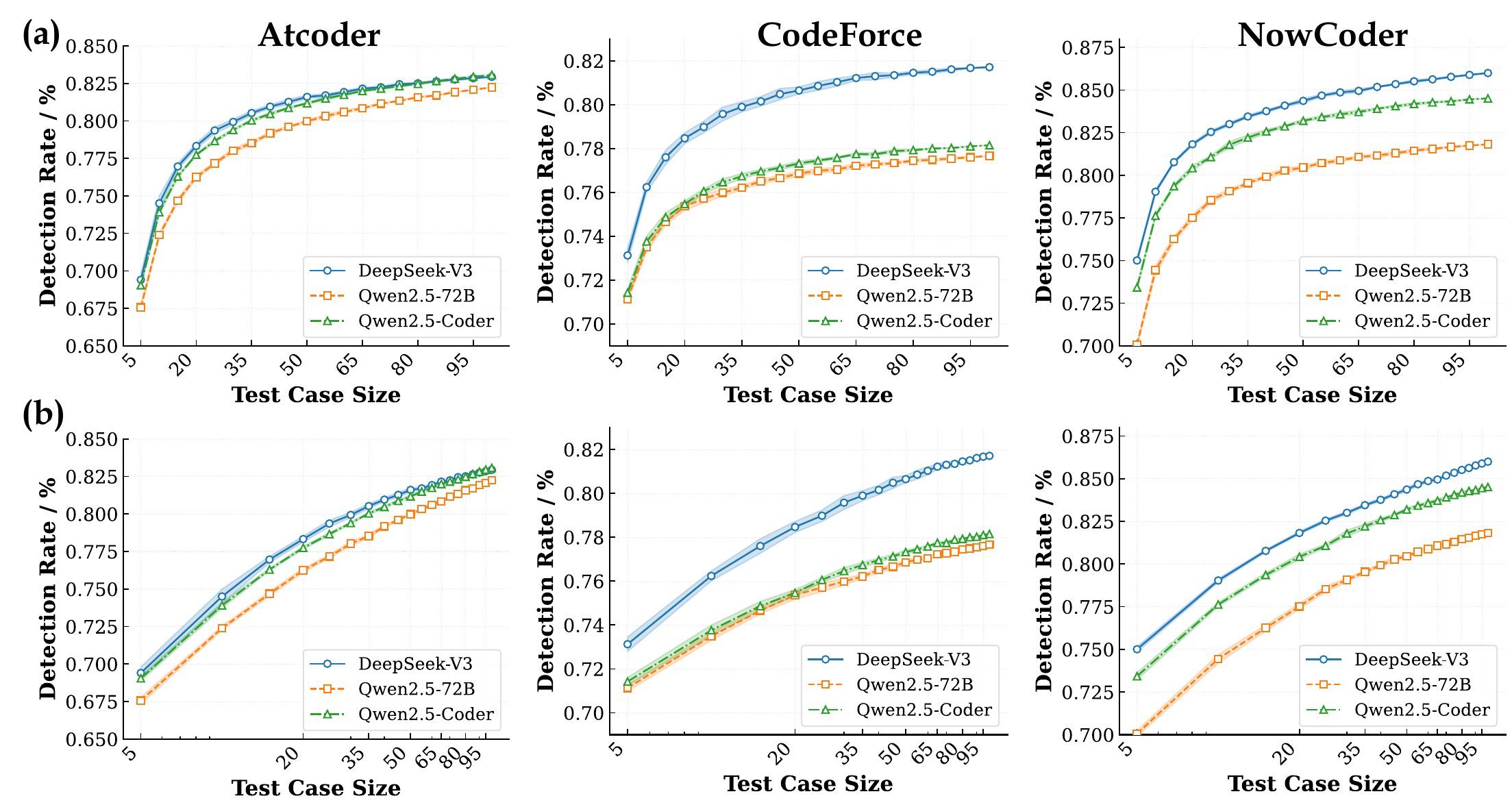}
    \caption{Experimental validation of Input-Interpreter (random sampling) limitations on TCGBench. (a) Detection rate vs. number of test cases (linear scale), showing clear saturation below 100\%. (b) Detection rate vs. log of the number of test cases (semi-log scale), illustrating diminishing returns consistent with the theoretical upper bound $1 - (1-\bar{p})^{n_{\mathrm{eff}}}$, validating the impact of correlation $\bar{\rho}$.}
    \label{fig:random_sampling_results}
\end{figure}
Beyond DR and Acc, to more deeply quantify the intrinsic quality and efficiency of TCG strategies like SAGA (introduced in Section~\ref{sec:saga_framework}), we employ two advanced metrics. These provide observable insights into test suite characteristics related to $\bar{p}$ (average test potency) and $\bar{\rho}_{\mathrm{eff}}$ (inter-test correlation):
\begin{itemize}[leftmargin=*, itemsep=0pt, parsep=0pt, topsep=1pt]
    \item \textbf{Distinct Error Pattern Coverage (DEPC)}: For a test suite $\mathcal{T}$ and $N_P$ problems, let $v(t_k)$ be the error pattern vector for test $t_k$. DEPC is $\Bigl| \bigl\{ v(t_k) \mid t_k \in \mathcal{T} \text{ and } \|v(t_k)\|_1 \ge 1 \bigr\} \Bigr|$. The \textbf{Diversity Ratio} is $\mathrm{DEPC}(\mathcal{T})/n$. Higher DEPC suggests lower $\bar{\rho}_{\mathrm{eff}}$, indicating broader unique error detection.
    \item \textbf{Normalized Area Under the Accuracy-Number of test cases Curve (AUC-AccN)}: For Verifier Accuracy $Acc(k)$ with $k$ tests up to $N$, 
    $$\mathrm{AUC@N} \approx \frac{1}{N - k_{min}} \sum_{i=k_{min}}^{N-1} \frac{Acc(k_i) + Acc(k_{i+1})}{2} (k_{i+1} - k_i).$$
    Higher AUC@$N$ indicates superior average verifier accuracy, reflecting potent (high $\bar{p}$) and efficiently diverse tests.
\end{itemize}
Detailed explanations and derivations for these metrics are in Appendix~\ref{app:advanced_metrics_formulation}. 
\section{SAGA: A Human-LLM Collaborative Framework for Advanced TCG}
\label{sec:saga_framework}

The preceding analysis (Section~\ref{sec:investigating_tcg_paradigms}) revealed critical limitations in prevalent Test Case Generation (TCG) paradigms, such as low test quality and homogeneity. To address this, we introduce \textbf{SAGA} (\textbf{S}trategic \textbf{A}dversarial \& Constraint-differential \textbf{G}ener\textbf{A}tive workflow), a novel human-LLM collaborative framework (Figure~\ref{fig:saga_overview}). SAGA systematically generates high-quality, diverse, and discriminative test suites by maximizing test potency ($\bar{p}$) and diversity (lowering correlation $\bar{\rho}$). Additionally, we trained \textbf{TCGCoder-7B}, a SAGA-distilled 7B specialist model from 15,000 problems (details in Appendix~\ref{app:tcgcoder_training_details}), as a strong TCG baseline and reference.

\subsection{The SAGA Framework: Integrating Human Expertise}
\label{sec:saga_methodology}
\begin{wraptable}{r}{0.5\textwidth}
    \vspace{-0.2cm}
    \centering
    \caption{Initial Impact of Incorporating Simple Human Priors ($\mathcal{S}_{\text{human}}$) into Basic TCG Paradigms on AtCoder Results (Accuracy@50).}
    \label{tab:human_prior_effect}
    \scalebox{0.85}{
    \begin{tabular}{l|c}
    \hline
    \textbf{Method} & \textbf{Accuracy@50} \\
    \hline
    Direct Gen. (Paradigm 1) & 11.30\% \\
    Direct Gen. + Simple Priors & \textbf{15.06\%} (+3.76\%) \\
    \hline
    Input-Interpreter (Paradigm 2) & 23.36\% \\
    Input-Interpreter + Simple Priors & \textbf{27.95\%} (+4.59\%) \\
    \hline
    \end{tabular}}
    \vspace{-0.3cm}
\end{wraptable}
Recognizing the limitations of naive TCG, SAGA explores the integration of human expertise. Intuitively, leveraging human problem-solving insights should enhance TCG. Table~\ref{tab:human_prior_effect} shows that incorporating simple human priors (e.g., boundary values from $\mathcal{S}_{\text{human}}$) into basic TCG paradigms on TCGBench-Full yields some improvement. However, these gains are often marginal, failing to fully exploit LLM potential for detecting complex flaws. While methods like EvalPlus~\cite{evalplus} use human solutions, it's often for mutating seeds or formatting inputs, not deeply guiding large-scale challenging test generation.

SAGA advances beyond such superficial integration. As depicted in Figure~\ref{fig:saga_overview}, it deeply incorporates multifaceted human programming insights—from both correct solutions (GroundTruth, $\mathcal{S}_{\text{human}}$) and incorrect submissions (Human Bugs, $\mathcal{S}_{\text{wrong}}$)—with LLM reasoning via a structured, dual-pronged analytical strategy. An LLM is fed the problem description $P$ and insights gleaned from human solutions through a customized prompting module. This process generates Python \textit{Case Scripts} to produce test inputs, accompanied by \textit{Math Explanations} and \textit{Self-Validation} code to ensure correctness and relevance. The generated inputs are then processed by an \textit{Interpreter} (ground-truth solution) to yield test outputs, forming the final test cases. SAGA's core analytical dimensions are:

\begin{figure}[tp]
  \centering
  \includegraphics[width=1\columnwidth]{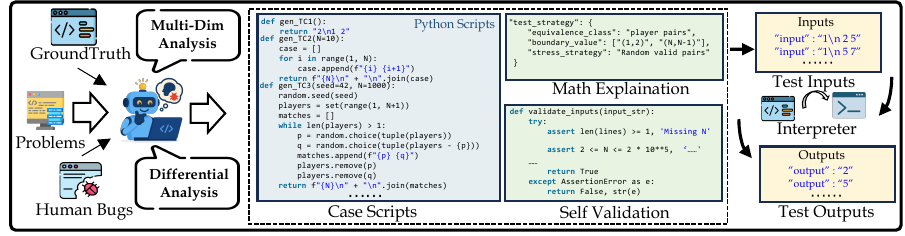} 
  \caption{Overview of the SAGA framework. SAGA leverages both GroundTruth (correct human solutions) and Human Bugs (incorrect submissions) alongside the Problem description. An LLM performs Multi-Dimensional Analysis and Differential Analysis to generate Python Case Scripts for test input synthesis. These scripts are accompanied by Math Explanations (capturing testing strategies and constraints) and Self-Validation code. The generated Test Inputs are then passed to an Interpreter (ground-truth solution) to produce Test Outputs, forming the final test cases.}
  \label{fig:saga_overview}
  \vspace{-0.3cm}
\end{figure}

\begin{itemize}[leftmargin=*, itemsep=0pt, parsep=0pt, topsep=2pt]
    \item \textbf{Multidimensional Analysis (Leveraging $\mathcal{S}_{\text{human}}$):} This dimension extracts profound insights from correct solutions to engineer challenging tests. It involves:
    \begin{enumerate}[leftmargin=*, itemsep=-2pt, topsep=0pt, partopsep=0pt]
       \item \textit{Constraint Handling Differences:} Discrepancies in how $S_{\text{wrong}}$ and $S_{\text{correct}}'$ manage problem-specific constraints.
      \item \textit{Defense Pattern Deconstruction:} This is where the \textit{Math Explanation} component (Figure~\ref{fig:saga_overview}) plays a key role. Diverse defensive logic and problem-solving strategies within $\mathcal{S}_{\text{human}}$ are decomposed into formal mathematical or logical constraints (e.g., "equivalence class: player pairs", "boundary\_value: [(1,2), (N,N-1)]"). This allows SAGA to target singularities, extremal values, or specific structural properties, guiding the generation of edge and adversarial test cases.
      \item \textit{Targeted Test Generation:} Using these constraints and pitfalls to guide the LLM in constructing challenging test inputs $I$ via the \textit{Case Scripts}.
    \end{enumerate}
    This analytical approach aims to generate test cases that not only cover a wide spectrum of valid scenarios, including complex boundary conditions and intricate interactions, but also enhance test diversity (lowering $\bar{\rho}$) and individual test potency (increasing $\bar{p}$).

    \item \textbf{Differential Analysis (Leveraging $\mathcal{S}_{\text{wrong}}$):} This addresses error types missed by analyzing only correct solutions. It compares failed submissions ($\mathcal{S}_{\text{wrong}}$) with their corrected versions ($S_{\text{correct}}'$) to find inputs $I_{\text{diff}}$ where $S_{\text{wrong}}(I_{\text{diff}}) \neq S_{\text{correct}}'(I_{\text{diff}})$, revealing common error patterns. This targets:
    \begin{enumerate}[leftmargin=*, itemsep=-2pt, topsep=0pt, partopsep=0pt]
        \item \textit{Constraint Handling Differences:} Discrepancies in how $S_{\text{wrong}}$ and $S_{\text{correct}}'$ manage problem-specific constraints.
        \item \textit{Lack of Defensive Completeness:} Deficiencies in $S_{\text{wrong}}$ related to handling edge cases or boundary inputs, as revealed by comparison with $S_{\text{correct}}'$.
        \item \textit{Failure Pattern Analysis:} Generating specific inputs that trigger failures in $S_{\text{wrong}}$ but are correctly handled by $S_{\text{correct}}'$.
    \end{enumerate}
    The incorporation of these differentially identified inputs $I_{\text{diff}}$ into the test suite $T$ creates a more rigorous and challenging evaluation framework, as it specifically targets known failure modes, thereby substantially increasing the discriminative power of the resulting verifier.
\end{itemize}
The \textit{Self-Validation} scripts (Figure~\ref{fig:saga_overview}) ensure that generated test inputs adhere to problem constraints and the intended testing strategy before execution.
For Multidimensional Analysis, SAGA leverages insights from 10 distinct, correct user solutions per problem to broaden perspective. In Differential Analysis, it meticulously pairs a user's correct solution with their most recent preceding incorrect submission. This focus on closely related yet differing attempts enhances SAGA's ability to identify subtle error patterns and generate challenging corner cases.
By synergistically combining these refined analytical dimensions, SAGA produces comprehensive and highly effective test suites.


\subsection{Experimental Validation of SAGA}
\label{sec:saga_experimental_validation_main}
SAGA's effectiveness was initially validated on the \textbf{TCGBench}. Figure~\ref{fig:saga_tcg_performance_analysis} visually summarizes SAGA's multifaceted superiority in this context, comparing it against the random Input-Interpreter baseline and its core analytical components (Multidimensional Analysis and Differential Analysis).

\begin{figure}[tp]
  \centering
  \includegraphics[width=\textwidth]{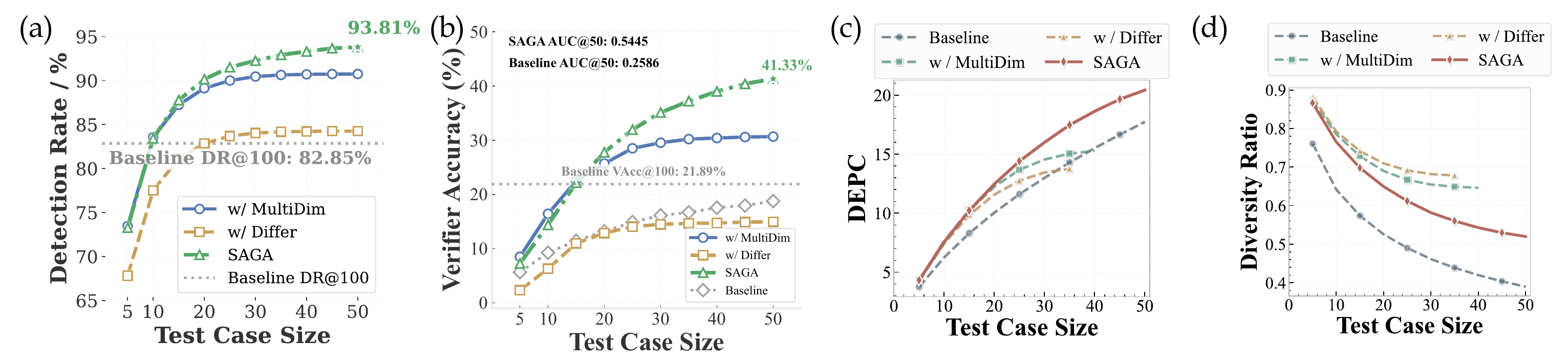}
  \caption{SAGA outperforms the Baseline (Random Input-Interpreter) and its individual analytical components (Multidimensional Analysis leveraging $\mathcal{S}_{\text{human}}$; Differential Analysis leveraging $\mathcal{S}_{\text{wrong}}$) on the AtCoder subset of the full \textbf{TCGBench} across: (a) Detection Rate, (b) Verifier Accuracy (with AUC@50 values), (c) Distinct Error Pattern Coverage (DEPC), and (d) Diversity Ratio. Dotted lines in (a) \& (b) show baseline performance at $n=100$. See Appendix~\ref{app:saga_performance_full_tcgbench} for SAGA's performance on other platforms within the full TCGBench.}
  \label{fig:saga_tcg_performance_analysis}
  \vspace{-0.4cm}
\end{figure}

\textbf{Key Findings from Comparison on TCGBench (Fig.~\ref{fig:saga_tcg_performance_analysis}):}
SAGA markedly improves \textit{solution vetting capabilities}: its DR surpasses 93.81\% (vs. baseline's 82.85\% at $n=100$) and VAcc reaches 41.33\% (vs. baseline's 21.89\% at $n=100$) with only 50 tests. SAGA's AUC@50 (0.5445) more than doubles the baseline's (0.2586), showcasing superior efficiency.
SAGA also generates test suites of \textit{superior intrinsic quality}: it achieves the highest DEPC (broader error coverage, lower $\bar{\rho}$) and Diversity Ratio (more efficient error discovery per test). Both Multidimensional and Differential analysis components individually outperform the baseline, but their synergy in SAGA yields optimal results. This robustly validates SAGA's systematic leveraging of human insights on a large scale.

\subsubsection{Main Results and Analysis on TCGBench-Lite}
\label{sec:main_results_analysis_tcgbench_lite}
For focused main comparisons and ablation studies, we curated \textbf{TCGBench-Lite}, a challenging subset of 270 problems from AtCoder, Codeforces, and Nowcoder contests since June 2024. This ensures contemporary relevance and minimizes potential data leakage. TCGBench-Lite includes an average of 41.41 incorrect submissions ($\mathcal{S}_{\text{wrong}}$) per problem. Its difficulty distribution (Easy: 27.04\%, Medium: 32.59\%, Hard: 40.37\%) was determined by platform tags and contest characteristics (details in Appendix~\ref{app:tcgbench_lite_CodeCompass_details}). This curated set allows for rigorous yet manageable evaluation. Unless specified, all LLM-driven methods use DeepSeek-V3-0324 as the backbone for fair comparison. We evaluate against the \textbf{Input-Interpreter (LiveCodeBench-style random sampling)}, \textbf{TestChain}~\cite{testchain}, \textbf{EvalPlus}~\cite{evalplus}, and our SAGA-distilled \textbf{TCGCoder-7B}. Results are in Table~\ref{tab:combined_results_auc_dr50}.

\begin{table}[tp]
    \centering
    \caption{Experimental Results: Main Comparison of SAGA with Baselines and Ablation Studies on TCGBench-Lite. Metrics are DR@$k$, VAcc@$k$, AUC@50, and DivRatio@50. All methods use DeepSeek-V3-0324 backbone unless specified in ablation.}
    \label{tab:combined_results_auc_dr50}
    \scalebox{0.78}{%
    \begin{tabular}{@{}lcccccc@{}}
    \toprule\heavyrulewidth=2pt
    \textbf{Method / Configuration} & \textbf{DR@20} & \textbf{DR@50} & \textbf{VAcc@20} & \textbf{VAcc@50} & \textbf{AUC@50} & \textbf{DivRatio@50} \\
    \midrule
    \multicolumn{7}{@{}l}{\textit{Main Comparison with Baseline TCG Methods}} \\
    \quad TestChain~\cite{testchain}                    & 65.91\% & 68.31\% & 8.12\% & 11.88\% & 0.0841 & 50.09\% \\
    \quad Input-Interpreter (LiveCodeBench-style~\cite{LCB}) & 77.84\% & 81.07\% & 12.36\% & 16.72\% & 0.1234 & 79.42\% \\
    \quad EvalPlus~\cite{evalplus} & 67.52\% & 71.12\% & 11.56\% & 15.15\% & 0.1139 & 79.27\% \\
    \rowcolor{gray!10} 
    \quad TCGCoder-7B (SAGA-distilled model)   & 85.14\% & 89.44\% & 17.93\% & 29.11\% & 0.1890 & 94.43\% \\
    \rowcolor{gray!20} 
    \quad \textbf{SAGA (DeepSeek-V3 Backbone)} & \underline{85.66\%} & \textbf{90.62\%} & \textbf{22.40\%} & \underline{32.58\%} & \textbf{0.2228} & 94.06\% \\
    \midrule
    \multicolumn{7}{@{}l}{\textit{SAGA Ablation Studies}} \\
    \multicolumn{7}{@{}l}{\quad \textit{Analytical Component Ablation}} \\
    \quad \quad SAGA w/ Multidim. Analysis only & 84.51\% & 88.00\% & 20.70\% & 26.05\% & 0.1923 & 95.81\% \\
    \quad \quad SAGA w/ Differential Analysis only & 84.31\% & 88.16\% & 19.85\% & 26.67\% & 0.1926 & 94.41\% \\
    \addlinespace
    \multicolumn{7}{@{}l}{\quad \textit{Prompt Design Ablation}} \\
    \quad \quad SimpleCOT Prompt for SAGA  & 83.36\% & 84.54\% & 15.61\% & 19.11\% & 0.1424  & \underline{96.23\%} \\
    \quad \quad Random Input w/ GT for SAGA  & 82.31\% & 86.64\% & 16.44\% & 22.70\% & 0.1616 & 85.38\% \\
    \quad \quad EvalPlus w/ GT for SAGA & 76.72\% & 79.56\% & 11.67\% & 20.44\% & 0.1278 & 89.49\% \\
    \addlinespace
    \multicolumn{7}{@{}l}{\quad \textit{Base LLM Ablation for SAGA}} \\
    \quad \quad SAGA w/ Qwen2.5-Coder-7B-Instruct       & 78.88\% & 79.78\% & 19.70\% & 22.96\% & 0.1810  & \textbf{96.80\%} \\
    \quad \quad SAGA w/ Qwen2.5-72B-Instruct       & 82.77\% & 85.08\% & 20.30\% & 26.46\% & 0.1943 & 94.92\% \\
    \quad \quad SAGA w/ Qwen2.5-Coder-32B-Instruct & \textbf{86.25\%} & \underline{90.54\%} & \underline{20.74\%} & \textbf{32.73\%} & \underline{0.2139}  & 94.72\% \\
    \bottomrule\heavyrulewidth=2pt
    \end{tabular}%
    }
    \vspace{-0.9cm}
\end{table}
\textbf{Analysis of Main Comparison (Table~\ref{tab:combined_results_auc_dr50}):}
SAGA's structured integration of human insights yields substantial gains over other TCG methods on TCGBench-Lite. Notably, SAGA achieves an AUC@50 of 0.2228, surpassing the Input-Interpreter (0.1234), which is limited by random sampling's homogeneity. This highlights SAGA's ability to produce more consistently effective tests. While EvalPlus achieves a high raw Diversity Ratio through mutation, its lower AUC@50 (0.1278) suggests that SAGA's deep analysis of human solutions is more critical for overall verifier quality than mere test variety. TestChain, lacking rich human priors, performs weakest.
Crucially, our SAGA-distilled \textbf{TCGCoder-7B} (AUC@50: 0.1890) outperforms all these established baselines, even when they utilize a much larger backbone model (DeepSeek-V3-0324). This demonstrates SAGA's potential to distill effective TCG strategies into smaller, specialized, and highly capable models, thereby making advanced TCG more accessible.

\textbf{Analysis of Ablation Studies (Table~\ref{tab:combined_results_auc_dr50}):}
\textit{Value of Structured Human Insights:} Isolating SAGA's Multidimensional Analysis (leveraging $\mathcal{S}_{\text{human}}$) and Differential Analysis (leveraging $\mathcal{S}_{\text{wrong}}$) reveals that both significantly outperform simpler prompting (SimpleCOT) and the Input-Interpreter baseline in terms of AUC@50. This underscores that structured analysis of human solutions, whether correct or incorrect, is fundamental for higher-quality verifiers. The full SAGA framework, synergistically combining both analytical dimensions, achieves the optimal overall quality (highest AUC@50), confirming their complementary benefits.
\textit{Importance of Prompt Design:} The substantial performance degradation when replacing SAGA's detailed, insight-driven prompts with generic CoT (SimpleCOT) highlights that the \textit{methodology} of integrating human insights is as critical as the insights themselves for effective TCG.
\textit{Robustness Across LLMs:} SAGA demonstrates commendable robustness when paired with different LLM backbones. While peak performance on specific metrics can vary with the LLM (e.g., SAGA with Qwen2.5-Coder-32B-Instruct yields the highest VAcc@50), the SAGA framework with its default DeepSeek-V3 backbone consistently provides the best overall AUC@50. This indicates SAGA's comprehensive design is more impactful for holistic verifier quality than relying on specific LLM capabilities or optimizing for isolated metrics like Diversity Ratio. Notably, SAGA also shows strong performance with Qwen2.5-Coder-7B-Instruct (TCGCoder-7B's base model), further validating SAGA's efficacy on smaller, specialized coder models.
\section{Towards Advanced Applications of SAGA}
\label{sec:saga_applications}

SAGA's proven capability in generating high-quality, diverse, and discriminative test suites (Section~\ref{sec:saga_framework}) enables advancements in LLM code evaluation and training. This section primarily explores the development of a superior code generation benchmark, \textbf{CodeComPass}, while also noting the potential for SAGA-enhanced verifiers to contribute to more reliable Reinforcement Learning from Verifiable Rewards (RLVR).

\para{CodeCompass: A SAGA-Enhanced Benchmark for Code Generation Evaluation}

\label{sec:CodeCompass_benchmark}
To address the need for more challenging and discerning evaluation of LLM code generation models, we introduce \textbf{CodeCompass}. The verifiers within CodeCompass are synthesized using SAGA for the 270 contemporary problems that also constitute TCGBench-Lite (detailed in Appendix~\ref{app:tcgbench_lite_CodeCompass_details}), ensuring relevance and minimizing data leakage risks. Each problem in CodeCompass features a rich test suite, averaging 50.54 SAGA-generated test cases, meticulously curated to ensure comprehensive coverage (with additional manual curation applied if initial generation yielded fewer than a target threshold).

For comparative analysis of verifier quality and impact on model evaluation, we focus on a shared subset of 101 AtCoder problems that overlap between our CodeCompass verifiers and the test suites used by LiveCodeBench-v6. This allows for a direct comparison on common ground.

\begin{figure}[tp]
\centering
\begin{minipage}{0.48\textwidth} 
\centering
\begin{table}[H]
\centering
\caption{Verifier Quality: CodeCompass vs. LCB-v6 (Shared Subset, @40 tests).} 
\label{tab:CodeCompass_verifier_quality}
\scalebox{0.75}{
\begin{tabular}{lcc}
\toprule
\textbf{Metric} & \textbf{LiveCodeBench-v6} & \textbf{CodeCompass} \\
\midrule
DR@40 & 78.85\% & \textbf{93.44\%} \\
VAcc@40 & 19.61\% & \textbf{30.39\%} \\
DivRatio@40 & 53.56\% & \textbf{96.69\%} \\
AUC@40 &  0.1388 & \textbf{0.1980} \\
\bottomrule
\end{tabular}}
\end{table}
\vspace{-0.2cm}
\includegraphics[width=\textwidth]{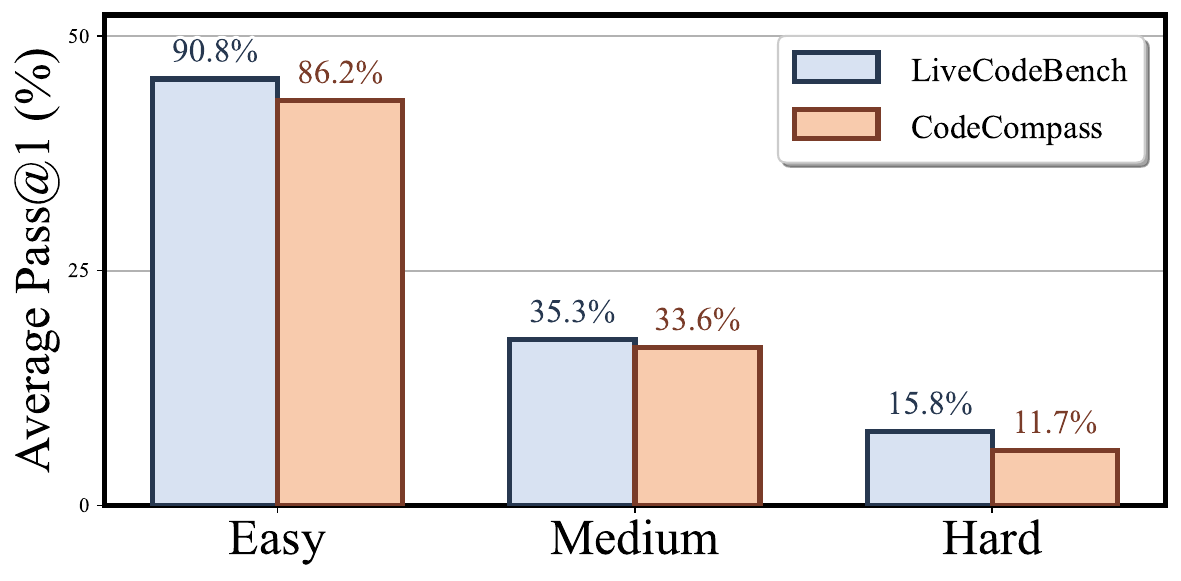}
\vspace{-0.5cm}
\captionof{figure}{Avg. Pass@1 by Difficulty.} 
\label{fig:difficulty_pass_comparison_CodeCompass}
\end{minipage}
\hfill
\begin{minipage}{0.48\textwidth} 
\centering
\includegraphics[width=\textwidth]{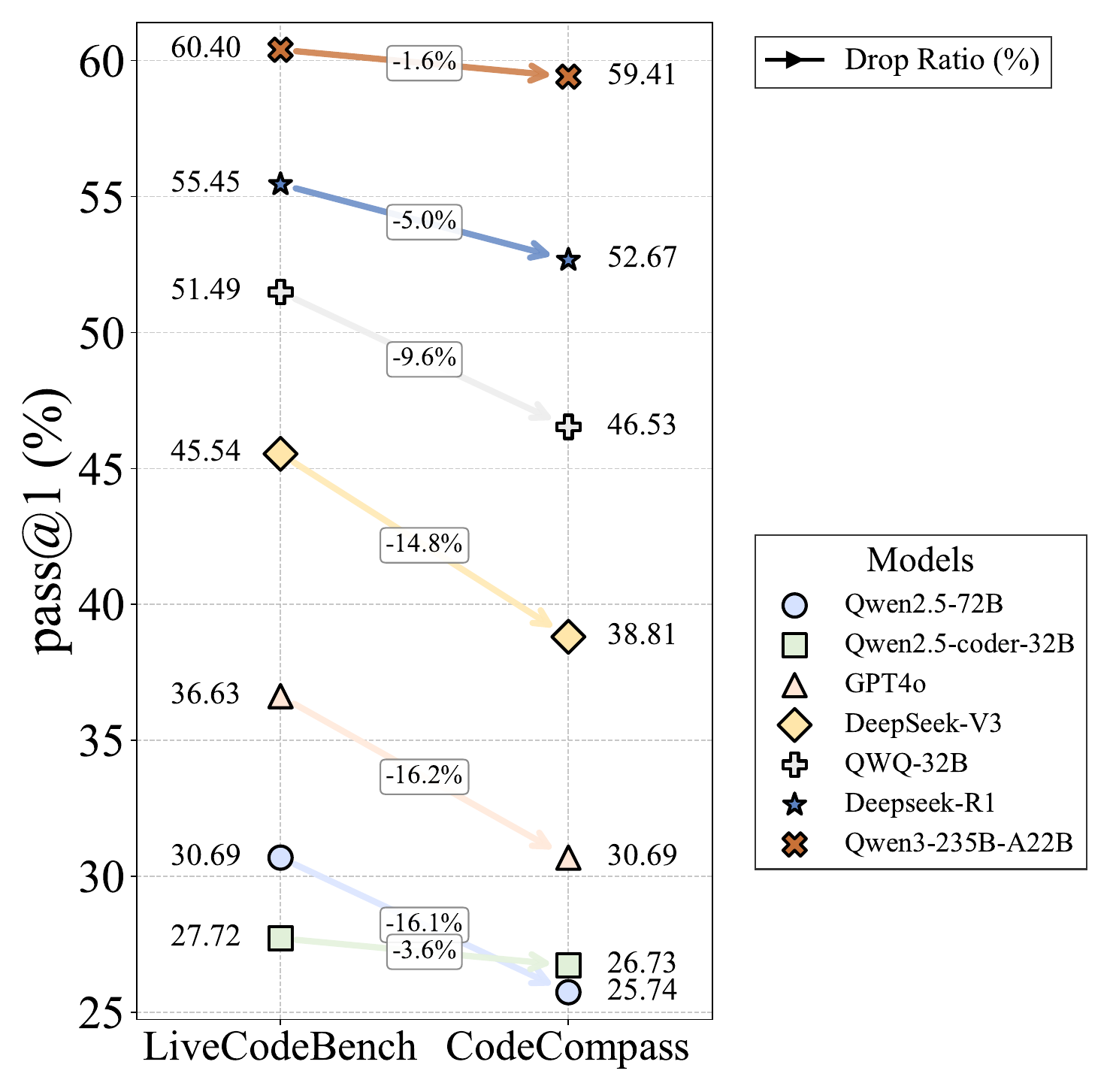}
\vspace{-0.5cm} 
\caption{Model Ranking Changes.} 
\label{fig:benchmark_comparison_CodeCompass}
\end{minipage}
\vspace{-0.6cm}
\end{figure}

\textbf{Superior Verifier Quality of CodeCompass.}
We first establish the intrinsic quality of CodeCompass verifiers using core TCG metrics against those of LiveCodeBench-v6 on the shared AtCoder subset (Table~\ref{tab:CodeCompass_verifier_quality}). CodeCompass demonstrates markedly higher efficacy in identifying faulty solutions, greater efficiency in discovering unique error patterns (Diversity Ratio@40: +43.13\%), and faster convergence to quality. This confirms that SAGA produces verifiers that are intrinsically more diverse, discriminative, and effective.

\textbf{Enhanced Discriminative Power for Code Generation Model Evaluation.}
This superior verifier quality translates directly to a more challenging and discerning evaluation for LLM code generation models. As shown in Figure~\ref{fig:difficulty_pass_comparison_CodeCompass}, CodeComPass consistently elicits lower average Pass@1 rates across all problem difficulties compared to LiveCodeBench-v6 on the shared subset, indicating a more rigorous test. Critically, this increased stringency enhances discriminative power (Figure~\ref{fig:benchmark_comparison_CodeCompass}): when evaluated on CodeComPass, the average Pass@1 for various models drops by a relative 9.56\% compared to their performance on LiveCodeBench-v6. This relative decrease leads to a re-ranking of models (e.g., Qwen2.5-72B and Qwen2.5-Coder-32B switch relative positions). This ability to expose nuanced differences in model capabilities confirms that CodeComPass offers a more robust and insightful assessment of true code generation proficiency.

Meanwhile, the superior SAGA-generated verifiers, exemplified by CodeComPass, promise to enhance Reinforcement Learning from Verifiable Rewards (RLVR) frameworks~\cite{dsr1, o1ioi} by delivering more accurate reward signals and mitigating reward hacking, thereby fostering more robust and capable code generation models.
\section{Conclusion}
\label{sec:conclusion}

This paper critically re-evaluates LLM-based Test Case Generation (TCG), highlighting current verifier limitations and formalizing key quality metrics alongside TCGBench, a foundational TCG research dataset. We introduce \textbf{SAGA}, a novel human-LLM collaborative framework that integrates human programming insights with LLM reasoning to demonstrably generate superior test suites. Leveraging SAGA, we developed \textbf{CodeComPass}, an enhanced verifier suite for robust code generation evaluation, and distilled \textbf{TCGCoder-7B}, a specialized, efficient TCG model. Collectively, these contributions significantly advance reliable LLM code evaluation and pave the way for future automated test synthesis and effective adversarial testing.

\bibliographystyle{plainnat}
\bibliography{main}

\clearpage
\appendix
\section*{\Large{Appendix}}
\label{sec:Appendix}

\section{Related Works}
\label{sec:related_work_revised}

\subsection{Advancements and Challenges in LLM-based Code Generation and Evaluation}
\label{ssec:rw_code_gen_eval}

Large Language Models (LLMs) have revolutionized automated code generation, with models like AlphaCode~\cite{AlphaCode}, GPT-4~\cite{gpt4}, Gemini~\cite{gemini_report}, and specialized code models such as CodeT5~\cite{codet5, codet5plus}, StarCoder~\cite{starcoder}, DeepSeek Coder~\cite{deepseekcoder}, CodeLlama~\cite{codellama}, and Qwen2.5-Coder~\cite{qwen2, qwen2coder} demonstrating remarkable capabilities. These models, whether trained via Supervised Fine-Tuning or enhanced through Reinforcement Learning (RL) with execution feedback (e.g., AlphaCode~\cite{AlphaCode}, PPOCoder~\cite{ppocoder}, CodeRL~\cite{coderl}), increasingly match or exceed human performance on diverse programming benchmarks.

The reliable evaluation of these sophisticated models is paramount and hinges on the quality of \textbf{Code Verifiers}—typically test suites—that ascertain the functional correctness of generated code. This is especially critical for RL frameworks employing verifiable rewards (RLVR)~\cite{dsr1, o1ioi, lever}, where test case quality directly impacts reward accuracy and training efficacy. However, as highlighted in Section~\ref{sec: introduction}, the comprehensiveness and robustness of verifiers often present a significant bottleneck, underscoring the critical need for effective Test Case Generation.

To this end, the research community has developed numerous benchmarks. Early benchmarks like HumanEval~\cite{humaneval} and MBPP~\cite{mbpp} provided foundational testbeds. Subsequent efforts like EvalPlus~\cite{evalplus, evalperf} and MBPP-Plus aimed to improve robustness by expanding existing test sets. More comprehensive benchmarks such as TACO~\cite{taco}, CodeContests~\cite{AlphaCode}
(focused on algorithmic problems), BigCodeBench~\cite{bigcodebench} (complex multi-library tasks), LiveCodeBench~\cite{LCB} (simulating online judge environments), and HumanEvalPack~\cite{humaneval_pack} (multilingual program synthesis) have broadened the evaluative scope. Concurrently, dedicated TCG benchmarks like TESTEVAL~\cite{testeval} and TestGenEval~\cite{testgeneval} have emerged. Despite these advancements, persistent issues in test case quality, coverage, and potential LLM-centric biases necessitate continuous efforts towards developing high-quality, diverse, and unbiased Code Verifiers.

\subsection{Methodologies for Test Case Generation (TCG)}
\label{ssec:rw_tcg_methods}

TCG is fundamental to validating code correctness and providing feedback for both evaluation and training of code generation models. While formally defined as a distinct task in this paper, TCG principles are integral to benchmark creation (e.g., LiveCodeBench~\cite{LCB}, TESTEVAL~\cite{testeval}) and RL data pipelines (e.g., ACECODER~\cite{acecoder}, CodeRL~\cite{coderl}). However, suboptimal test quality in existing datasets (e.g., TACO~\cite{taco}, CodeForces-CoTs~\cite{codeforcecots}) can lead to overestimated model performance and reward hacking, particularly in rule-based RL systems like AlphaCode~\cite{AlphaCode} and DeepSeek Coder~\cite{deepseekcoder}, thus highlighting the urgent need for effective TCG.

\textbf{Traditional TCG Techniques:} Software testing has a rich history of TCG. \textbf{Search-Based Software Testing (SBST)}~\cite{sbst_survey_anand, sbst_survey_harman} uses metaheuristics like genetic algorithms~\cite{search, sbst_ga_mcminn} to optimize for coverage criteria, though it can be computationally intensive. \textbf{Symbolic Execution}~\cite{symbolic, symbolic_execution_cadar} explores program paths systematically but faces path explosion and challenges with complex code. \textbf{Fuzzing}~\cite{fuzzing_survey_manès, afl_fuzz}, especially coverage-guided variants like AFL~\cite{afl_fuzz} and libFuzzer~\cite{libfuzzer}, excels at finding crashes and vulnerabilities by mutating inputs, though it may lack semantic depth for logical tests. Feedback-directed random testing~\cite{feedbackguided, randoop} also discovers defects but can sometimes lack diversity.

\textbf{LLM-based TCG Paradigms:} With the rise of LLMs, new TCG methods have emerged, leveraging semantic understanding for potentially greater coverage and diversity. As discussed in Section~\ref{sec:evaluating_verifier_quality_metrics_paradigms}, these generally follow two paradigms:
\begin{enumerate}[leftmargin=*, itemsep=0pt, parsep=0pt, topsep=1pt]
    \item \textit{Direct Generation:} LLMs produce complete test cases (inputs and outputs). This includes assertion-focused methods (e.g., CodeCoT~\cite{codecot}, AgentCoder~\cite{agentcoder}) and direct input-output synthesis (e.g., TestChain~\cite{testchain}, CodeRM~\cite{coderm}, AceCoder~\cite{acecoder}), aiming for logical coverage and boundary conditions.
    \item \textit{Input-Interpreter:} LLMs generate test inputs, which are then executed by a ground-truth solution to derive outputs. This is seen in LiveCodeBench~\cite{LCB},CodeForce-Cot~\cite{codeforcecots} and related to LLM-guided fuzzing. EvalPlus~\cite{evalplus, evalperf} also aligns by mutating seed inputs for execution. TestChain~\cite{testchain} also showed improvements by decoupling input and output generation.
\end{enumerate}

\textbf{Advanced and Specialized TCG Approaches:} In RL contexts, dynamic TCG is crucial. CodeRM-8B~\cite{coderm} adjusts test quantity by problem difficulty. LEVER~\cite{lever} uses learned verifiers, and other works focus on execution-feedback for reward model optimization, all highlighting the impact of test quality on RL outcomes. Differential testing, as in AID~\cite{TrickyBugs}, compares program versions to expose bugs, showing strong results on datasets like TrickyBugs~\cite{TrickyBugs} and EvalPlus~\cite{evalplus}. Recent efforts also explore generating tests targeting prior failures~\cite{self_debug_chen} or using LLMs for test suite refinement.

Despite these diverse approaches, achieving comprehensive logical coverage, generating truly diverse and challenging corner-case tests, and maintaining computational efficiency remain significant hurdles. Furthermore, many LLM-centric TCG methods may perpetuate biases inherent in the LLMs themselves (Section~\ref{sec: introduction}). This paper's SAGA framework addresses these limitations by proposing a novel human-LLM collaborative paradigm that systematically integrates deep human insights from both correct and incorrect solutions. By focusing on a dynamic, adaptive, and efficient TCG process, SAGA aims to enhance the reliability of LLM code evaluation and training, paving a new path for TCG research.

\section{Formulation and Interpretation of Advanced Evaluation Metrics}
\label{app:advanced_metrics_formulation}

This section provides detailed explanations and interpretations for the advanced metrics introduced in Section~\ref{sec:investigating_tcg_paradigms} to evaluate the intrinsic quality of test suites.

\subsection{Distinct Error Pattern Coverage (DEPC) and Diversity Ratio}
\label{app:depc_interpretation}
\textbf{Formulation (from main text):}
For a test suite $\mathcal{T}$ and $N_P$ problems, let $v(t_k)$ be the $N_P$-dimensional binary error pattern vector for test $t_k$ (where $v(t_k)_j=1$ if $t_k$ reveals an error for problem $j$).
$$
\mathrm{DEPC}(\mathcal{T}) = \Bigl| \bigl\{ v(t_k) \mid t_k \in \mathcal{T} \text{ and } \|v(t_k)\|_1 \ge 1 \bigr\} \Bigr|.
$$
The \textbf{Diversity Ratio} is $\mathrm{DEPC}(\mathcal{T})/n$, where $n = |\mathcal{T}|$.

\textbf{Interpretation:}
DEPC measures the breadth of error coverage by counting the number of unique ways (patterns) in which the test suite can detect failures across a set of problems. A higher DEPC signifies that the test suite is capable of identifying a wider variety of distinct error types or combinations of errors. This directly relates to the concept of inter-test case correlation ($\bar{\rho}_{\mathrm{eff}}$) discussed in our theoretical model (Appendix~\ref{app:detection_rate_saturation_model}): a test suite with high DEPC is likely composed of tests that are less correlated in terms of the errors they detect, thus having a lower $\bar{\rho}_{\mathrm{eff}}$. The Diversity Ratio normalizes DEPC by the number of test cases, indicating the average efficiency of each test in contributing a new, distinct error pattern. A high Diversity Ratio suggests that the test suite is not only diverse but also concise, with less redundancy among its test cases.

\subsection{Normalized Area Under the Accuracy-Number of test cases Curve (AUC-AccN)}
\label{app:auc_accn_interpretation}
\textbf{Formulation (from main text):}
For Verifier Accuracy $Acc(k)$ achieved with a test suite of size $k$ (up to a maximum $N$, starting from $k_{min}$), the AUC-AccN is approximated by the trapezoidal rule:
\[
\mathrm{AUC@N} \approx \frac{1}{N - k_{min}} \sum_{i=k_{min}}^{N-1} \frac{Acc(k_i) + Acc(k_{i+1})}{2} (k_{i+1} - k_i).
\]

\textbf{Interpretation:}
AUC-AccN quantifies the average Verifier Accuracy as the test suite size grows, providing a single scalar value to compare the overall effectiveness and efficiency of different TCG strategies. A higher AUC@$N$ (ranging from 0 to 1) indicates that a TCG strategy consistently generates test suites that achieve higher verifier accuracy across various sizes up to $N$.
This metric is a composite reflection of both the average potency of individual test cases ($\bar{p}$) and the effective diversity of the test suite (related to $\bar{\rho}_{\mathrm{eff}}$).
\begin{itemize}[leftmargin=*, itemsep=0pt, parsep=0pt, topsep=1pt]
\item \textbf{Impact of $\bar{p}$ (Test Potency):} A higher average $\bar{p}$ means individual tests are more likely to detect errors. This leads to a steeper initial rise in the $Acc(k)$ curve and a higher overall level of accuracy achieved, both of which contribute to a larger AUC@$N$. Thus, AUC-AccN is particularly sensitive to $\bar{p}$.
\item \textbf{Impact of Diversity (low $\bar{\rho}_{\mathrm{eff}}$):} Greater diversity (lower $\bar{\rho}_{\mathrm{eff}}$), empirically reflected by a high DEPC, allows the $Acc(k)$ curve to sustain its rise or plateau at a higher level for a larger number of test cases before saturation effects become dominant. This sustained high accuracy also contributes to a larger AUC@$N$.
\end{itemize}
Therefore, a high AUC@$N$ signifies a TCG strategy that excels at generating tests that are individually powerful (high $\bar{p}$) and collectively non-redundant (low $\bar{\rho}_{\mathrm{eff}}$), leading to efficient and robust verifier construction within the specified test suite size limit. It reflects a better overall quality of the generated test cases in achieving high average performance.

\section{Theoretical Analysis of Detection Rate}
\label{app:detection_rate_saturation_model}

We analyze the detection rate $\epsilon_S(T) = \mathbb{P}(X \ge 1)$ for a test suite $T = \{t_1, \dots, t_n\}$, where $X = \sum_{i=1}^n \indicator{E_i}$ and $p_i = \mathbb{P}(E_i)$ is the probability that test $t_i$ detects an error. 

\subsection{Modeling Correlated Heterogeneous Bernoulli Trials}

The expectation of $X$ is $\mathbb{E}[X] = \sum p_i = n\bar{p}$, where $\bar{p} = \frac{1}{n}\sum p_i$.
The variance is $\mathrm{Var}(X) = \sum p_i(1-p_i) + \sum_{i \neq j} \mathrm{Cov}(\indicator{E_i}, \indicator{E_j})$.

To model the effect of correlation in a tractable manner that allows for insights similar to the homogeneous case (uniform $p$, uniform $\rho$), we consider an \textit{approximating model}. We define an effective average pairwise covariance, $\bar{C}_{\mathrm{eff}}$, and an average individual variance, $\bar{\sigma^2_p} = \frac{1}{n}\sum p_i(1-p_i)$. We then model the variance as if it arose from a system with these averaged second-order characteristics:
\[ \mathrm{Var}_{\mathrm{approx}}(X) = n\bar{\sigma^2_p} + n(n-1)\bar{C}_{\mathrm{eff}}. \]
If we further posit that these average characteristics can be related through an effective average correlation $\bar{\rho}_{\mathrm{eff}}$ such that $\bar{C}_{\mathrm{eff}} \approx \bar{\rho}_{\mathrm{eff}} \sqrt{\text{avg}(p_i(1-p_i))^2}$ or, more simply for conceptual linkage, $\bar{C}_{\mathrm{eff}} \approx \bar{\rho}_{\mathrm{eff}} \bar{p}(1-\bar{p})$ (assuming $p_i$ are not excessively dispersed, making $\bar{\sigma^2_p} \approx \bar{p}(1-\bar{p})$), then:
\begin{equation}
\label{eq:variance_model}
\mathrm{Var}_{\mathrm{approx}}(X) \approx n\bar{p}(1-\bar{p})[1 + (n-1)\bar{\rho}_{\mathrm{eff}}].
\end{equation}
This equation models the variance of $X$ as if it were a sum of $n$ Bernoulli trials with a common success probability $\bar{p}$ and a common pairwise correlation $\bar{\rho}_{\mathrm{eff}}$. The validity of this approximation depends on the actual distribution of $p_i$ and the structure of covariances. However, it serves as a useful model to understand the qualitative impact of average correlation.

\begin{definition}[Model-Based Effective Sample Size $n_{\mathrm{eff}}^*$]
\label{def:neff_model}
Within this approximating model (Eq.~\ref{eq:variance_model}), and by analogy to Kish's design effect~\cite{kish1965survey} (where $\text{deff}^* \approx 1 + (n-1)\bar{\rho}_{\mathrm{eff}}$), the model-based effective sample size is:
\[
n_{\mathrm{eff}}^* \approx \frac{n}{1 + (n-1)\bar{\rho}_{\mathrm{eff}}}.
\]
This $n_{\mathrm{eff}}^*$ represents the number of hypothetical independent Bernoulli($\bar{p}$) trials that would exhibit the variance given by Eq.~\ref{eq:variance_model}. This is consistent with adjustments for correlated data~\cite{lohr2021sampling,fitzmaurice2012applied,liu1997sample}.
\end{definition}

\subsection{Upper Bound and Saturation within the Model}
Using $n_{\mathrm{eff}}^*$ and $\bar{p}$ from our model, we analyze the detection rate $\epsilon_S(T) = 1 - \mathbb{P}(X=0)$.

\begin{theorem}[Model-Based Approximate Upper Bound on Detection Rate]
\label{thm:upper_bound_dr_model}
Within the described approximating model, the detection rate $\epsilon_S(T)$ is approximately upper bounded by:
\[
\epsilon_S(T) \approx 1 - (1-\bar{p})^{n_{\mathrm{eff}}^*} \approx 1 - (1-\bar{p})^{\frac{n}{1 + (n-1)\bar{\rho}_{\mathrm{eff}}}}.
\]
\end{theorem}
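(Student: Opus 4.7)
The plan is to establish the bound by substituting the correlated indicator sum $X=\sum_{i=1}^{n}\indicator{E_i}$ with a variance-matched independent Binomial surrogate and reading $\mathbb{P}(X=0)$ off from that surrogate. Starting from $\epsilon_S(T)=1-\mathbb{P}(X=0)$, the guiding intuition is that if the $\indicator{E_i}$ were i.i.d.\ Bernoulli$(\bar{p})$, one would have $\mathbb{P}(X=0)=(1-\bar{p})^n$ immediately; the whole work lies in correcting the exponent to reflect the information loss induced by the positive effective correlation $\bar{\rho}_{\mathrm{eff}}$.

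First, I would invoke the approximating variance formula Eq.~\eqref{eq:variance_model}, which supplies $\mathrm{Var}_{\mathrm{approx}}(X)\approx n\bar{p}(1-\bar{p})[1+(n-1)\bar{\rho}_{\mathrm{eff}}]$. Then I would seek a real number $m$ such that a Binomial$(m,\bar{p})$ variate $X'$ reproduces this variance; solving $m\bar{p}(1-\bar{p})=\mathrm{Var}_{\mathrm{approx}}(X)$ returns precisely $m=n_{\mathrm{eff}}^{*}$, in line with Definition~\ref{def:neff_model}. Adopting $X'$ as the variance-equivalent surrogate of $X$ yields the model-based identity $\mathbb{P}(X=0)\approx\mathbb{P}(X'=0)=(1-\bar{p})^{n_{\mathrm{eff}}^{*}}$, and substitution delivers the claimed expression. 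The upper-bound flavor follows from the FKG-type observation that for positively associated binary variables $\mathbb{P}\bigl(\bigcap_i\overline{E_i}\bigr)\ge\prod_i(1-p_i)$: after averaging per-trial probabilities and pairwise covariances into $\bar{p}$ and $\bar{\rho}_{\mathrm{eff}}$, the surrogate probability $(1-\bar{p})^{n_{\mathrm{eff}}^{*}}$ plays the role of this lower bound inside the approximating model, so $\epsilon_S(T)\lesssim 1-(1-\bar{p})^{n_{\mathrm{eff}}^{*}}$.

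The principal technical obstacle is the moment-based surrogate step, because matching only mean and variance does not uniquely pin down a distribution, and in particular does not in general control the single-point probability $\mathbb{P}(X=0)$. I would shore this up either with a Chen--Stein Poisson-approximation argument in the rare-event regime where $\bar{p}$ is small, under which $\mathbb{P}(X=0)\approx e^{-n_{\mathrm{eff}}^{*}\bar{p}}\approx(1-\bar{p})^{n_{\mathrm{eff}}^{*}}$ to leading order, or with a maximum-entropy-style argument that selects the Binomial as the canonical sum of exchangeable Bernoullis with prescribed mean-per-trial and pairwise correlation. Once the surrogate is validated, Corollary~\ref{cor:main_asymptotic_dr_saturation} falls out of the present theorem simply by letting $n\to\infty$ in $n_{\mathrm{eff}}^{*}=n/[1+(n-1)\bar{\rho}_{\mathrm{eff}}]$, whose limit is $1/\bar{\rho}_{\mathrm{eff}}$.
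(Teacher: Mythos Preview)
Your variance-matching step contains a direction error that undermines the derivation. You propose to solve $m\bar{p}(1-\bar{p})=\mathrm{Var}_{\mathrm{approx}}(X)$ for $m$ and claim this returns $n_{\mathrm{eff}}^{*}$. But substituting Eq.~\eqref{eq:variance_model} gives
\[
m\bar{p}(1-\bar{p})=n\bar{p}(1-\bar{p})\bigl[1+(n-1)\bar{\rho}_{\mathrm{eff}}\bigr]\quad\Longrightarrow\quad m=n\bigl[1+(n-1)\bar{\rho}_{\mathrm{eff}}\bigr],
\]
which is $n\cdot\text{deff}^{*}\ge n$, not $n_{\mathrm{eff}}^{*}=n/\text{deff}^{*}\le n$. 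Intuitively, positive correlation \emph{inflates} the variance of the sum, so matching the sum's variance with an independent Binomial requires \emph{more} trials, not fewer. A sanity check at $\bar{\rho}_{\mathrm{eff}}=1$ makes the failure vivid: your route gives $m=n^{2}$ and hence $\mathbb{P}(X'=0)=(1-\bar{p})^{n^{2}}\to 0$, whereas under perfect correlation all indicators coincide and $\mathbb{P}(X=0)=1-\bar{p}$ exactly.

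The paper's (admittedly terse) proof sketch does not go through sum-variance matching at all: it takes $n_{\mathrm{eff}}^{*}$ as \emph{given} by Definition~\ref{def:neff_model}, whose Kish design-effect construction is founded on matching the variance of the sample \emph{mean} $X/n$ (equivalently, the information content about $\bar{p}$), and then simply posits that the correlated system ``behaves like'' $n_{\mathrm{eff}}^{*}$ independent Bernoulli$(\bar{p})$ trials when approximating $\mathbb{P}(X=0)$. Your FKG remark and the Chen--Stein / maximum-entropy scaffolding are reasonable embellishments beyond what the paper provides, but they can only be grafted on once the surrogate size is correctly identified as $n_{\mathrm{eff}}^{*}$ via the mean-variance (design-effect) route rather than the sum-variance route you wrote down.
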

\begin{proof}[Proof Sketch]
The probability $\mathbb{P}(X=0)$ is approximated by that of $n_{\mathrm{eff}}^*$ independent Bernoulli($\bar{p}$) trials, which is $(1-\bar{p})^{n_{\mathrm{eff}}^*}$.
\end{proof}

This theorem suggests that even when individual $p_i$ vary, if there's an effective positive average correlation $\bar{\rho}_{\mathrm{eff}} > 0$, the system behaves as if it has fewer independent tests, limiting the detection rate.


\subsection{Interpretation and Implications for Performance Metrics}
The derivation above, employing an approximating model based on average parameters ($\bar{p}, \bar{\rho}_{\mathrm{eff}}$), robustly indicates that a persistent positive effective correlation among test case detection events leads to a saturation of the overall detection rate. The core insight—that redundancy limits the marginal gain from additional tests—remains.
This theoretical observation is crucial for understanding the performance of TCG strategies and their impact on the empirical metrics used in this paper:

\begin{itemize}[leftmargin=*, itemsep=0pt, parsep=0pt, topsep=1pt]
    \item \textbf{Verifier Accuracy ($Acc(T)$) and its relation to \(\bar{p}\)}:
    The Verifier Accuracy at any given test suite size $n$, $Acc(T_n)$, is fundamentally driven by the test suite's ability to expose errors, which is heavily influenced by the average potency of its constituent test cases. A higher average error detection probability, $\bar{p}$, means that individual tests are, on average, more "powerful" or "incisive." Consequently, a TCG strategy yielding a higher $\bar{p}$ will lead to a verifier that more readily and correctly identifies faulty solutions, directly boosting $Acc(T_n)$. While high correlation ($\bar{\rho}_{\mathrm{eff}}$) can limit the ultimate achievable accuracy by causing early saturation of distinct error discovery, a strong $\bar{p}$ is essential for the accuracy curve to reach a high level in the first place.

    \item \textbf{AUC-AccN as a reflection of sustained high \(\bar{p}\) and managed \(\bar{\rho}_{\mathrm{eff}}\)}:
    The Area Under the Accuracy-Number of test cases Curve (AUC-AccN), which quantifies the average Verifier Accuracy as test suite size increases, is a composite reflection of both $\bar{p}$ and $\bar{\rho}_{\mathrm{eff}}$. A high $\bar{p}$ ensures that the $Acc(k)$ curve rises steeply and achieves a significant altitude. This initial rapid ascent and the overall height of the curve contribute substantially to a larger AUC-AccN. Concurrently, a lower effective correlation $\bar{\rho}_{\mathrm{eff}}$ (i.e., greater diversity, reflected empirically by DEPC) allows the accuracy to be sustained or to continue growing across a larger number of test cases before significant saturation, thereby expanding the area under the curve. Therefore, strategies achieving a high AUC-AccN are those that likely generate test cases with a consistently high average error detection probability ($\bar{p}$) and effectively manage redundancy (lower $\bar{\rho}_{\mathrm{eff}}$). The magnitude of $\bar{p}$ is particularly critical for the "value" captured by AUC-AccN, as it dictates the average level of accuracy being integrated.

    \item \textbf{DEPC and its relation to \(\bar{\rho}_{\mathrm{eff}}\)}:
    DEPC empirically captures the diversity of error patterns. A TCG strategy that yields high DEPC is effectively generating tests with low effective average correlation $\bar{\rho}_{\mathrm{eff}}$, thus mitigating the saturation effect on discovering new types of errors and allowing for a more sustained increase in overall detection capability.
\end{itemize}
Therefore, the pursuit of TCG methods like SAGA, which aim to enhance both individual test case strength (targeting a higher $\bar{p}$) and inter-test case diversity (targeting a lower $\bar{\rho}_{\mathrm{eff}}$), is theoretically well-founded for optimizing these key verifier performance metrics.

\section{TCGBench: Foundational Dataset for TCG Research}
\label{app:tcgbench_full_details}
As introduced in Section~\ref{sec:investigating_tcg_paradigms}, \textbf{TCGBench} is the comprehensive dataset curated for our Test Case Generation (TCG) research. It aggregates 1840 recent programming problems sourced from three leading competitive programming platforms: AtCoder (\url{https://atcoder.jp}), Codeforces (\url{https://codeforces.com}), and Nowcoder (\url{https://www.nowcoder.com}). These platforms are recognized for their diverse algorithmic challenges. For each problem in TCGBench, we also collected an average of 36.66 incorrect human submissions, specifically those resulting in "Wrong Answer" (WA) or "Time Limit Exceeded" (TLE) verdicts. This large-scale collection of problems, along with their corresponding WA/TLE submissions, provides a rich empirical foundation for studying human error patterns and rigorously developing and evaluating TCG methodologies. The problems are sourced from recent contests to ensure currency and minimize data leakage risks when evaluating contemporary LLMs. Further details on data collection, filtering criteria, and specific contest sources are available in supplementary materials.

\section{TCGBench-Lite and CodeCompass: Curated Set for Evaluation}
\label{app:tcgbench_lite_CodeCompass_details}
For the main experimental comparisons and ablation studies presented in Section~\ref{sec:main_results_analysis_tcgbench_lite}, and for constructing the \textbf{CodeCompass} verifiers (Section~\ref{sec:CodeCompass_benchmark}), we curated \textbf{TCGBench-Lite}. This is a focused subset of 270 problems sourced from AtCoder, Codeforces, and Nowcoder contests held since June 2024, ensuring high contemporary relevance and minimizing potential data leakage for evaluating newer models. TCGBench-Lite features an average of 41.41 incorrect submissions per problem.

The difficulty distribution for TCGBench-Lite and CodeCompass (Easy: 27.04\%, Medium: 32.59\%, Hard: 40.37\%) was determined by a multi-faceted approach. This involved considering platform-provided difficulty tags, the type of contest round (e.g., AtCoder Beginner Contest vs. Regular Contest; Codeforces Div.4/3 vs. Div.2/1), typical problem-solving patterns associated with specific problem slots within these contests, and general community perception of difficulty for similar problems. For instance, early problems in beginner-focused contests were generally classified as 'Easy', while later problems in advanced contests or those requiring complex algorithms/data structures were classified as 'Hard'. This classification aims to provide a balanced yet challenging set for rigorous evaluation. The verifiers in CodeCompass, used for code generation evaluation, consist of an average of 50.54 SAGA-generated test cases per problem for these 270 problems. The characteristics are summarized in Table~\ref{tab:CodeCompass_v1_overview_appendix}.

\begin{table}[htbp!]
\centering
\caption{Overview of TCGBench-Lite (Core Dataset for CodeCompass Verifiers).}
\label{tab:CodeCompass_v1_overview_appendix}
\scalebox{0.9}{ 
\begin{tabularx}{\columnwidth}{@{} l >{\raggedright\arraybackslash}X @{}} 
\toprule\heavyrulewidth=1.5pt 
\textbf{Aspect} & \textbf{Details for CodeCompass Evaluation} \\
\midrule
\textbf{Core Dataset Source} & Atcoder, Codeforces, Nowcoder (June 2024 - Present) \\
\textbf{Total Problems} & 270 \\
\textbf{Difficulty Distribution} & Easy (27.04\%), Medium (32.59\%), Hard (40.37\%) \\
\midrule
\textbf{Scale for CG Evaluation} & Avg. Test Cases/Problem: 50.54 \\
& Avg. $\mathcal{S}_{\text{wrong}}$/Problem (for SAGA's TCG): 41.41 \\
\midrule
\textbf{Primary CG Metric} & Pass@k \\
\bottomrule\heavyrulewidth=1.5pt 
\end{tabularx}}
\vspace{-0.3cm}
\end{table}

\begin{figure}[htbp!]
    \centering
    \includegraphics[width=\textwidth]{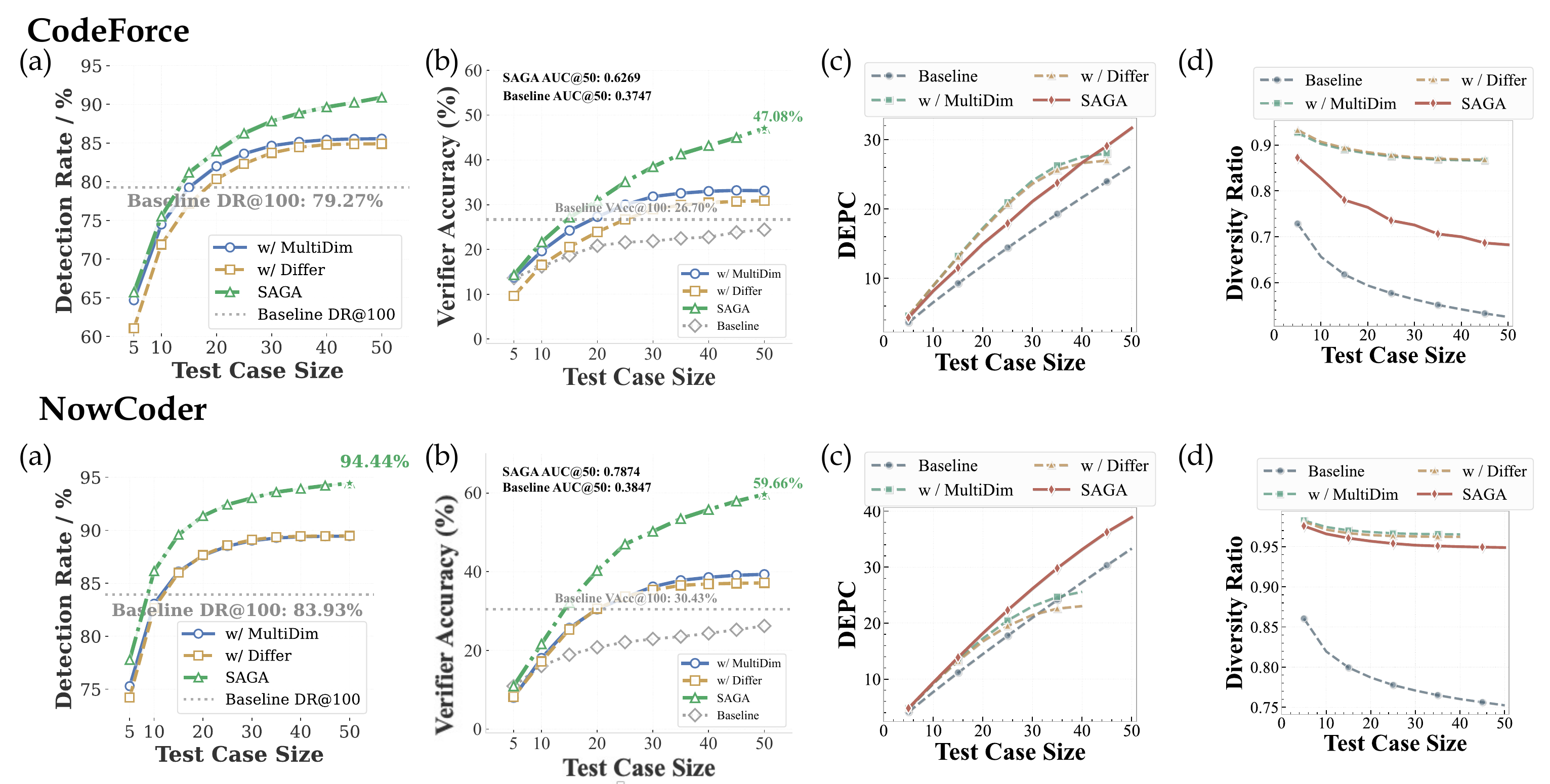}
    \caption{SAGA performance on Codeforces (CF) and Nowcoder (NC) problems from the full TCGBench dataset: (a) Detection Rate, (b) Verifier Accuracy (with AUC@50), (c) DEPC, and (d) Diversity Ratio, compared to Baseline (Input-Interpreter) and SAGA's analytical components (Multidimensional Analysis and Differential Analysis). Dotted lines in (a) \& (b) show respective baseline performance at $n=100$.}
    \label{fig:app_saga_performance_cf_nc_full_appendix}
\end{figure}

\section{SAGA Performance on Full TCGBench}
\label{app:saga_performance_full_tcgbench}

To demonstrate SAGA's broader applicability beyond the AtCoder subset shown in Figure~\ref{fig:saga_tcg_performance_analysis} (which used the full TCGBench for that visualization), Figure~\ref{fig:app_saga_performance_cf_nc_full_appendix} presents SAGA's performance on the Codeforces and Nowcoder portions of the complete TCGBench dataset. SAGA consistently replicates its superior performance, exhibiting enhanced efficacy (DR, Acc) and superior test suite quality (DEPC, Diversity Ratio) compared to the baseline and its individual analytical components across these platforms as well. This consistent pattern of improvement underscores the fundamental benefits of SAGA's structured, insight-driven approach to TCG.

To further demonstrate SAGA's broad applicability and robustness, Figure~\ref{fig:app_saga_performance_all_backends_appendix} presents its Detection Rate (DR) and Verifier Accuracy (VAcc) when paired with different LLM backbones (Qwen2.5-Coder-7B-Instruct, Qwen2.5-72B-Instruct, and DeepSeek-V3-0324) on the Codeforces and Nowcoder portions of the full TCGBench dataset, compared against the Input-Interpreter baseline (using DeepSeek-V3). Across both platforms and all LLM backbones, SAGA consistently and significantly outperforms the baseline in DR and VAcc at various test case sizes. Notably, even SAGA with the smaller Qwen2.5-Coder-7B often surpasses the baseline that utilizes the larger DeepSeek-V3, highlighting SAGA's ability to effectively guide diverse LLMs. While larger SAGA backbones generally yield higher absolute performance, the consistent uplift provided by the SAGA framework across different models and problem sources underscores its fundamental benefits and general applicability for advanced TCG.

\begin{figure}[htbp!]
\centering
\includegraphics[width=\textwidth]{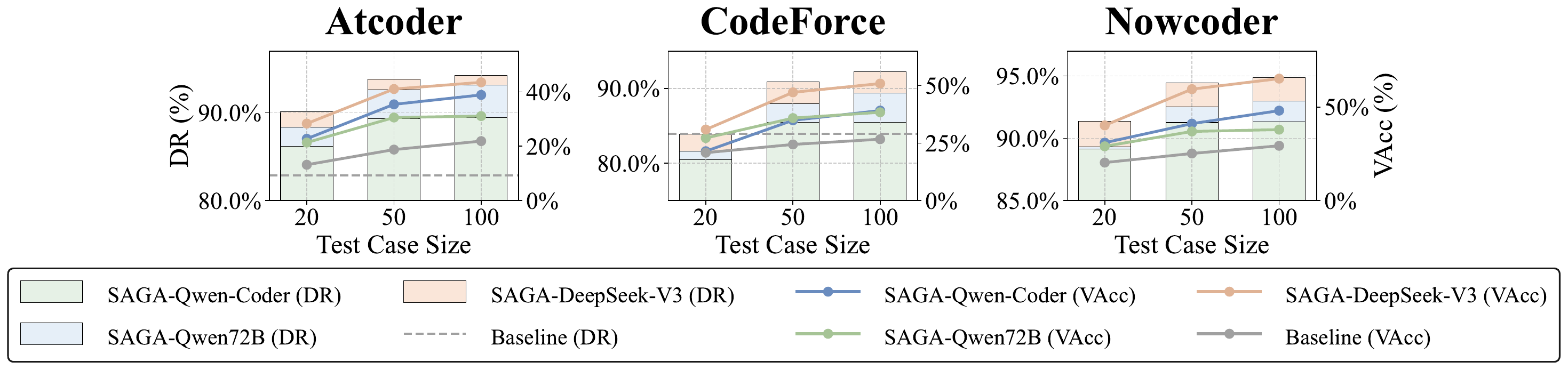} 
\caption{SAGA performance with different LLM backbones (Qwen-Coder, Qwen-72B, DeepSeek-V3) compared to the Baseline (Input-Interpreter with DeepSeek-V3) on Codeforces and Nowcoder portions of the full TCGBench dataset. Metrics: Detection Rate (DR) and Verifier Accuracy (VAcc) at varying test case sizes. Dashed lines indicate baseline performance.}
\label{fig:app_saga_performance_all_backends_appendix}
\end{figure}

\section{Detailed Performance Analysis on TCGBench-Lite by Difficulty}
\label{app:performance_by_difficulty}

To provide a more granular understanding of how different Test Case Generation (TCG) methods perform across varying levels of problem complexity, Figure~\ref{fig:performance_by_difficulty_appendix} illustrates the Verifier Accuracy (VAcc@50) and Detection Rate (DR@50) of SAGA, its analytical components (Multidimensional Analysis only, denoted "w/ MultiDim"; Differential Analysis only, denoted "w/ Differ"), and baseline TCG methods (TestChain, EvalPlus, and Random Input-Interpreter) on the Easy, Medium, and Hard problem subsets within TCGBench-Lite.

\begin{figure}[htbp!]
    \centering
    \includegraphics[width=\textwidth]{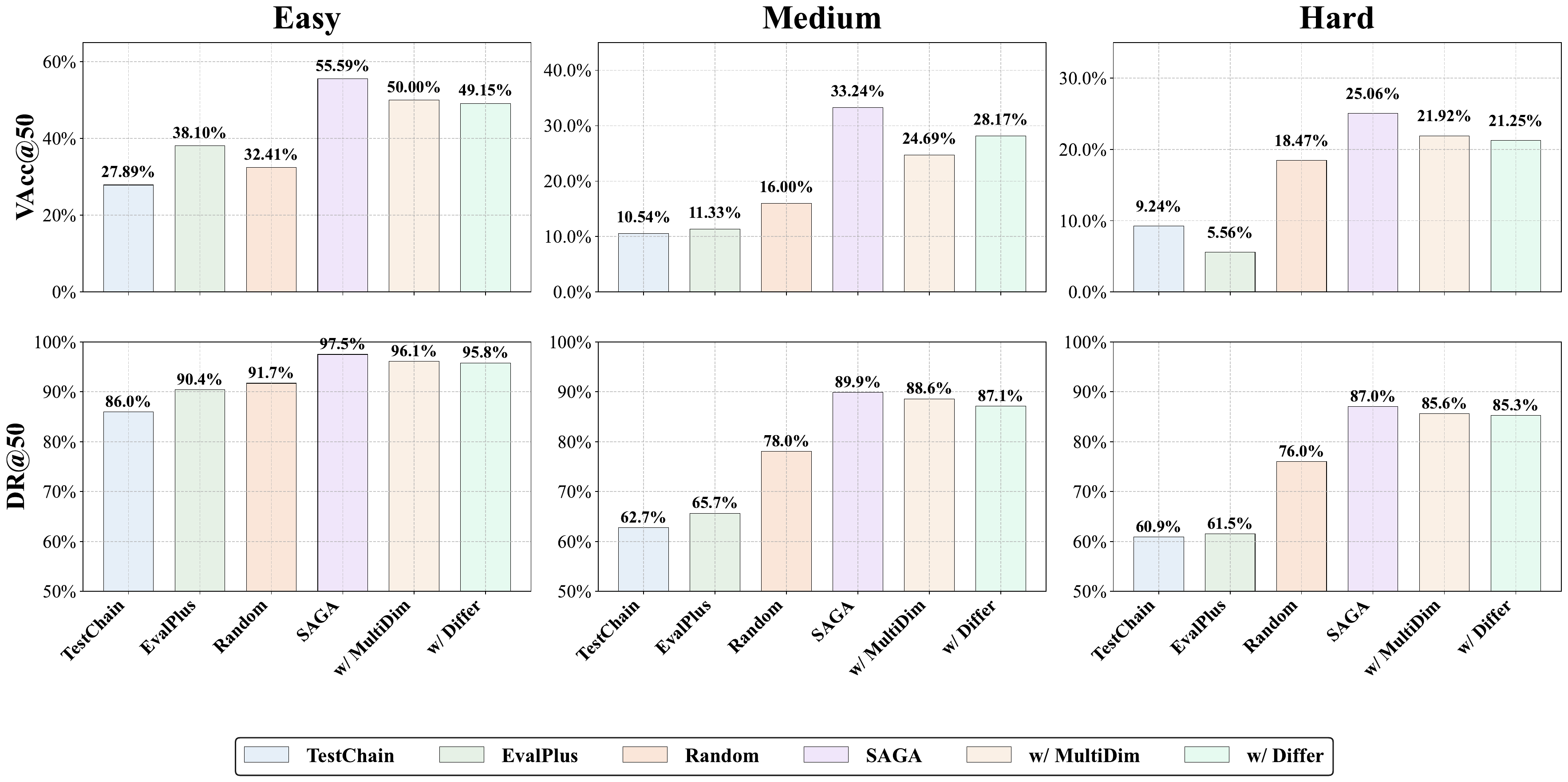} 
    \caption{Performance comparison (VAcc@50 and DR@50) of SAGA, its components, and baseline TCG methods across Easy, Medium, and Hard problem subsets of TCGBench-Lite. SAGA consistently outperforms baselines across all difficulties, and its full framework generally surpasses its individual analytical components, especially on harder problems.}
    \label{fig:performance_by_difficulty_appendix}
\end{figure}

\noindent \textbf{Key Observations from Figure~\ref{fig:performance_by_difficulty_appendix}:}

\begin{itemize}[leftmargin=*, itemsep=0.5pt, parsep=0pt, topsep=1pt]
    \item \textbf{Consistent Superiority of SAGA:} Across all difficulty tiers (Easy, Medium, Hard) and for both VAcc@50 and DR@50, the full SAGA framework consistently outperforms all baseline methods (TestChain, EvalPlus, Random). This underscores SAGA's robustness and its ability to generate more effective test suites regardless of problem complexity. For instance, on Hard problems, SAGA achieves a VAcc@50 of 25.06\%, substantially higher than the Random baseline (18.47\%) and other methods. A similar trend is observed for DR@50, where SAGA reaches 87.0\% on Hard problems.

    \item \textbf{Impact of Problem Difficulty on Baselines:} The performance of baseline methods, particularly TestChain and EvalPlus, degrades more noticeably as problem difficulty increases. For example, EvalPlus's VAcc@50 drops from 38.10\% on Easy problems to a mere 5.56\% on Hard problems. This suggests that these methods may struggle to generate effective test cases for more complex scenarios or subtle bugs prevalent in harder problems. The Random Input-Interpreter shows more resilience than TestChain and EvalPlus on harder problems but still falls short of SAGA.

    \item \textbf{Synergy of SAGA's Analytical Components:} While both Multidimensional Analysis ("w/ MultiDim") and Differential Analysis ("w/ Differ") components of SAGA individually outperform the baselines, the full SAGA framework generally achieves the best performance or is highly competitive.
        \begin{itemize}[leftmargin=*, itemsep=0pt, parsep=0pt]
            \item On \textit{Easy} problems, SAGA's VAcc@50 (55.59\%) is notably higher than "w/ MultiDim" (50.00\%) and "w/ Differ" (49.15\%), indicating that the combination of insights from both correct and incorrect solutions is beneficial even for simpler problems.
            \item For \textit{Medium} problems, SAGA (VAcc@50: 33.24\%) again leads, showing a clear advantage over relying on only one type of human prior.
            \item On \textit{Hard} problems, the synergy is particularly evident. SAGA's VAcc@50 (25.06\%) is superior to "w/ MultiDim" (21.92\%) and "w/ Differ" (21.25\%). This suggests that for complex problems with elusive bugs, leveraging diverse insights from both correct solution structures and patterns of common errors is crucial for generating highly discriminative test suites.
        \end{itemize}
    A similar synergistic effect is generally observed for DR@50, where the full SAGA framework often provides the highest or near-highest detection rates.

    \item \textbf{Effectiveness of Differential Analysis on Harder Problems:} Interestingly, the "w/ Differ" component (Differential Analysis leveraging $\mathcal{S}_{\text{wrong}}$) shows relatively strong performance on Hard problems compared to its performance on Easy problems, particularly for VAcc@50. This might imply that analyzing patterns from incorrect submissions is especially valuable for uncovering the types of subtle or complex errors that characterize more difficult problems.

    \item \textbf{Limitations of Simpler Priors:} The performance of "Random" (Input-Interpreter) and even "EvalPlus" (which uses human solutions for mutation) on Medium and Hard problems highlights that merely having access to human solutions or employing random generation is insufficient. SAGA's structured approach to \textit{analyzing and strategically leveraging} these human priors is what drives its superior performance, especially as complexity increases.
\end{itemize}

\section{Supplementary Experimental Analyses}
\label{app:supplementary_experiments}

To further explore mechanisms for robust test suite construction and the value of diverse generation strategies, we present two additional studies: an analysis of mixing random test cases from different LLMs, and an ablation study on SAGA's knowledge sources.

\subsection{Efficacy of Mixing Random Test Cases from Different Language Models}
\label{app:mixing_testcases_supplementary}

Our theoretical framework highlights that test suite quality is influenced by individual test potency ($\bar{p}$) and inter-test case correlation ($\bar{\rho}_{\mathrm{eff}}$). We investigated managing $\bar{\rho}_{\mathrm{eff}}$ by mixing random test cases (akin to the Input-Interpreter paradigm from Section~\ref{sec:investigating_tcg_paradigms}) sourced from different LLMs (V3: DeepSeek-V3-0324, 72B: Qwen2.5-72B-Instruct, Coder: Qwen2.5-Coder-7B-Instruct). The hypothesis is that tests from \textit{different} models may exhibit lower inter-source correlation, leading to improved combined suite characteristics. Figure~\ref{fig:mix_heatmap_appendix} shows the AUC@50 when mixing random tests, with diagonal elements representing single-LLM suites.

\begin{figure}[htbp!]
    \centering
    \includegraphics[width=0.8\textwidth]{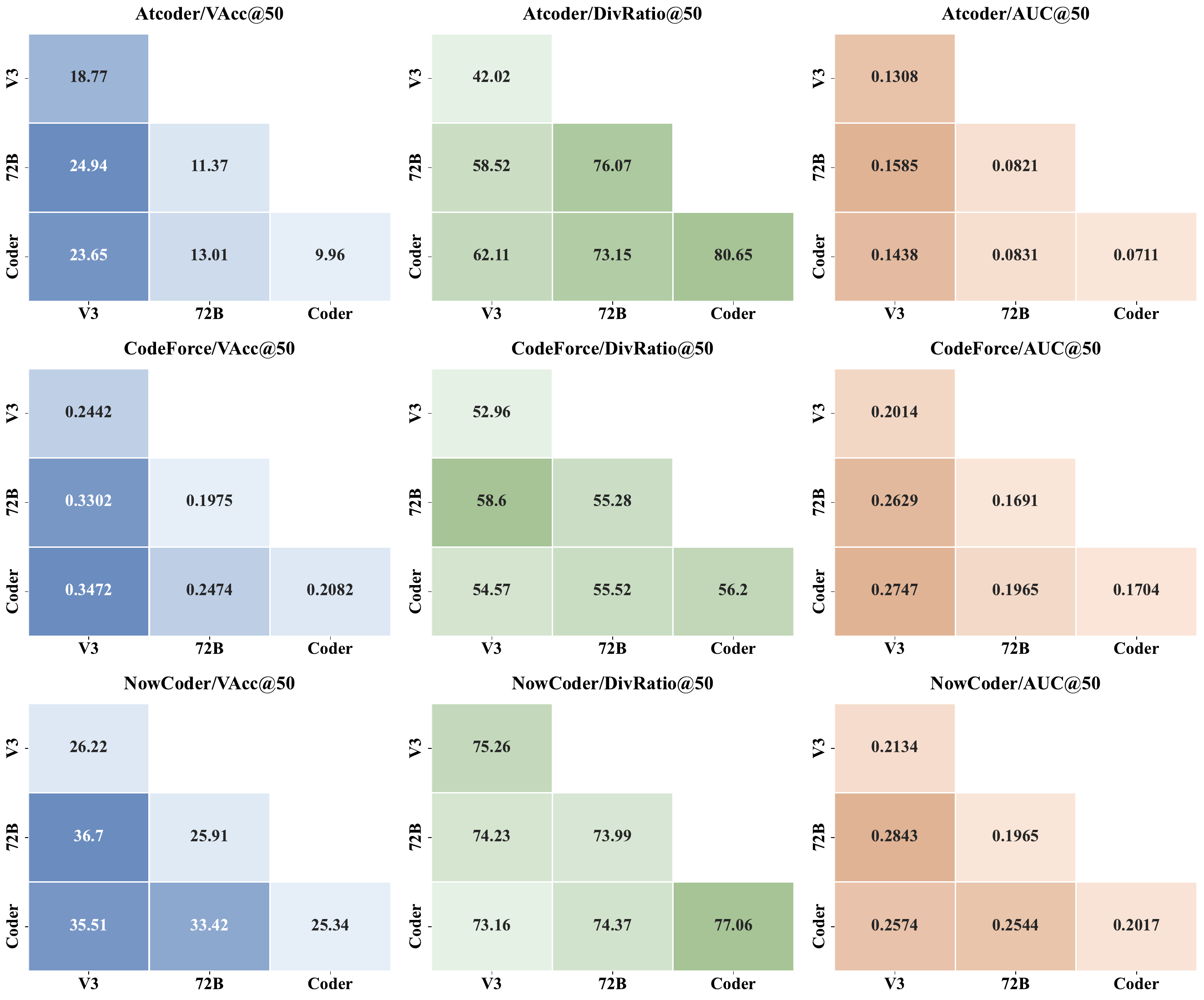}
    \caption{Heatmap illustrating AUC@50 performance of mixed random test suites. Diagonal elements: suites from a single model (V3, 72B, Coder). Off-diagonal (i,j): mixing random tests from model i and model j.}
    \label{fig:mix_heatmap_appendix}
    \vspace{-0.3cm}
\end{figure}

\noindent \textbf{Observations}
\begin{itemize}[leftmargin=*, itemsep=0.5pt, parsep=0pt, topsep=0pt]
    \item \textbf{Benefit of Model-Source Diversity}: Mixing random tests from two different LLMs frequently yields superior AUC@50 compared to using tests from only one, across AtCoder, Codeforces, and Nowcoder datasets. For instance, on Codeforces, mixing V3 (AUC@50: 0.2014) with 72B (0.1691) results in a mixed AUC@50 of 0.2629, surpassing both. This suggests complementary biases even in random generation.
    \item \textbf{Reduced Effective Correlation}: The improvements imply that combining tests from different models likely lowers $\bar{\rho}_{\mathrm{eff}}$ compared to single-source suites. Enhanced diversity allows the combined suite to cover a broader error spectrum, increasing AUC@50.
    \item \textbf{Surpassing Stronger Components}: Often, the mixed suite outperforms even the stronger individual model in the pair (e.g., V3+72B on Codeforces). This robustly shows different LLMs contribute unique, valuable random tests, highlighting complementary strengths.
\end{itemize}

\noindent \textbf{Take-away}: Diversifying the source of even "naive" random tests reduces correlation and improves test suite quality, supporting our theoretical framework. While SAGA achieves this more directly via structured analysis, this experiment underscores the general impact of minimizing $\bar{\rho}_{\mathrm{eff}}$.

\subsection{Ablation Study: Impact of Human Knowledge Source Volume in SAGA}
\label{app:ablation_knowledge_source_volume}

To dissect SAGA's knowledge source contributions, we conducted an ablation study on the Codeforces portion of TCGBench. We modified SAGA by removing its Differential Analysis component (insights from incorrect solutions, $\mathcal{S}_{\text{wrong}}$) entirely. Instead, we doubled the volume of correct human solutions ($\mathcal{S}_{\text{human}}$) fed into SAGA's Multidimensional Analysis component, creating a "Multidim-Enhanced" version. The aim was to see if increasing one type of human insight could compensate for omitting another. Table~\ref{tab:ablation_cf_multidim_enhanced_appendix} compares the original SAGA with this "Multidim-Enhanced" configuration.

\begin{table}[htbp!]
    \centering
    \caption{Ablation Study on Codeforces (TCGBench): SAGA vs. Multidim-Enhanced (Double $\mathcal{S}_{\text{human}}$ for Multidimensional Analysis, No Differential Analysis).}
    \label{tab:ablation_cf_multidim_enhanced_appendix}
    \begin{tabular}{lcccc}
        \toprule
        \textbf{Configuration}  & \textbf{DR@50} &  \textbf{VAcc@50}  & \textbf{AUC@50} & \textbf{DivRatio@50} \\
        \midrule
        SAGA (Multidim. + Differ.)    & 90.89\%  & 47.08\% & 0.3195 & 0.697 \\
        Multidim-Enhanced   & 88.09\%  & 38.73\% & 0.2744 & 0.701 \\
        \bottomrule
    \end{tabular}
\end{table}

\noindent \textbf{Analysis of Ablation Results (Table~\ref{tab:ablation_cf_multidim_enhanced_appendix}):}
\begin{itemize}[leftmargin=*, itemsep=0.5pt, parsep=0pt, topsep=0pt]
    \item \textbf{Degradation in Core Effectiveness}: Despite doubling $\mathcal{S}_{\text{human}}$ input, the "Multidim-Enhanced" version shows a clear drop in VAcc@50 (from 47.08\% to 38.73\%) and AUC@50 (from 0.3195 to 0.2744) compared to the full SAGA. This suggests insights from incorrect solutions via Differential Analysis are crucial for verifier quality and cannot be fully compensated by merely increasing the volume of correct solutions for Multidimensional Analysis.
    \item \textbf{Diversity vs. Effectiveness}: While the Diversity Ratio is comparable or slightly higher for "Multidim-Enhanced", this raw diversity doesn't translate to better overall verifier quality (AUC@50). This reinforces that the \textit{type} of diversity and the nature of uncovered error patterns (targeted by Differential Analysis) are critical, not just diversity as a raw count.
    \item \textbf{Implications for SAGA's Design}: This strongly supports SAGA's dual-pronged approach of integrating insights from \textit{both} correct and incorrect human solutions. The unique error patterns revealed by Differential Analysis appear vital for building high-quality, discriminative verifiers, and their contribution is not simply replicable by scaling up the input to Multidimensional Analysis alone.
\end{itemize}

\section{Model Performance on CodeCompass}
\label{app:model_performance_codecompass_appendix}
To further illustrate the utility of CodeCompass as a challenging benchmark for evaluating LLM code generation capabilities, Table~\ref{tab:model_performance_codecompass_cpp_py} summarizes the Pass@1 performance of several contemporary LLMs. The evaluation was conducted separately for C++ and Python problem instances within CodeCompass, utilizing its SAGA-enhanced verifier suite.

The results demonstrate a clear differentiation among models in both languages, underscoring CodeCompass's ability to effectively rank and assess current code generation models. The challenging nature of the SAGA-generated test cases in CodeCompass provides a rigorous testbed for future model development and comparison.

\begin{table}[htbp!]
\centering
\caption{Performance of Various LLMs on CodeCompass (Pass@1) for C++ and Python.}
\label{tab:model_performance_codecompass_cpp_py}
\scalebox{0.9}{
\begin{tabular}{lcc}
\toprule
\textbf{Model} & \textbf{Pass@1 on CodeCompass (C++)} & \textbf{Pass@1 on CodeCompass (Python)} \\
\midrule
Qwen2.5-Coder-32B-Instruct & 15.93\% & 12.59\% \\
Qwen2.5-72B-Instruct       & 17.04\% & 14.81\% \\
GPT-4o (2024-11-20)        & 20.74\% & 14.44\% \\
DeepSeek-V3          & 24.81\% & 23.33\% \\
QWQ-32B                    & 31.85\% & 26.30\% \\
DeepSeek-Chat-R1           & 38.15\% & 34.07\% \\
Qwen3-235B-A22B            & 43.70\% & 36.30\% \\
\bottomrule
\end{tabular}}
\end{table}

\noindent From Table~\ref{tab:model_performance_codecompass_cpp_py}, it is evident that CodeCompass effectively differentiates LLM performance across both C++ and Python. While relative model rankings show some consistency, language-specific performance variations are also notable, highlighting the benchmark's capacity to reveal nuanced capabilities. The generally moderate Pass@1 rates underscore the challenging nature of CodeCompass due to its SAGA-enhanced test suites, making it a valuable resource for rigorous multi-lingual code generation assessment.

\section{TCGCoder-7B Training Details}
\label{app:tcgcoder_training_details}

TCGCoder-7B, our specialist 7-billion-parameter model for Test Case Generation (TCG), was fine-tuned from \textbf{Qwen2.5-Coder-7B-Instruct}~\cite{qwen2,qwen2coder}. The training dataset, distinct from our evaluation sets to prevent data leakage, comprised 15,000 early-stage programming problems from Codeforces and NowCoder, processed by our SAGA framework (Section~\ref{sec:saga_framework}) to generate structured outputs (Python \textit{Case Scripts}, \textit{Math Explanations}, \textit{Self-Validation} code, per Figure~\ref{fig:saga_overview}). The fine-tuning aimed to distill SAGA's TCG reasoning into TCGCoder-7B. Key training configurations included 3 epochs, a global batch size of 16, an initial learning rate of 5e-6 (minimum 3e-7), a max sequence length of 61,335 tokens, and the qwen2 chat template. Training utilized Fully Sharded Data Parallel (FSDP) across 2 nodes, each with 8 GPUs.

\end{document}